\newtheorem{theorem}{Theorem}[section]
\newtheorem{proposition}[theorem]{Proposition}
\theoremstyle{definition}
\newtheorem{definition}[theorem]{Definition}
\theoremstyle{remark}
\newtheorem{remark}[theorem]{Remark}
\newtheorem{notation}[theorem]{Notation}
\newcommand{\N}{\mathbb{N}}
\newtheorem{corollary}[theorem]{Corollary}
\newtheorem{fact}[theorem]{Fact}
\DeclareMathOperator{\U}{\mathcal{U}}
\begin{document}
\fontfamily{pbk}\selectfont

\title{Latent Space Topology Evolution in Multilayer Perceptrons}
\author{Eduardo Paluzo-Hidalgo\\
Department of Applied Mathematics I,\\ University of Sevilla\\
epaluzo@us.es}

\keywords{Neural Network Interpretability, Topological Data Analysis (TDA), Mapper Algorithm, Latent Representations, Multilayer Perceptrons (MLPs)}

\maketitle

\begin{abstract}
This paper introduces a topological framework for interpreting the internal representations of Multilayer Perceptrons (MLPs). We construct a \emph{simplicial tower}, a sequence of simplicial complexes connected by simplicial maps, that captures how data topology evolves across network layers. Our approach enables bi-persistence analysis: \emph{layer persistence} tracks topological features within each layer across scales, while \emph{MLP persistence} reveals how these features transform through the network. We prove stability theorems for our topological descriptors and establish that linear separability in latent spaces is related to disconnected components in the nerve complexes. To make our framework practical, we develop a combinatorial algorithm for computing MLP persistence and introduce trajectory-based visualisations that track data flow through the network. Experiments on synthetic and real-world medical data demonstrate our method's ability to identify redundant layers, reveal critical topological transitions, and provide interpretable insights into how MLPs progressively organise data for classification.
\end{abstract}

\section{Introduction}
\label{sec:intro}

The widespread deployment of neural networks in critical decision-making systems has created an urgent need for interpretable machine learning models. While these architectures demonstrate remarkable empirical success across diverse domains, their internal mechanisms remain largely opaque, earning them the notorious designation as ``black boxes". This opacity originates from the confluence of several fundamental challenges: the high-dimensional nature of parameter spaces, the compositional complexity introduced by multiple layers of non-linear transformations, and the emergent behaviours that arise from the interplay between architecture and optimisation dynamics.

In this work, we focus on Multilayer Perceptrons (MLPs), the foundational architecture underlying modern deep learning. Despite their apparent simplicity compared to contemporary architectures, MLPs remain ubiquitous as essential components in more complex models. They appear as dense layers in Convolutional Neural Networks (CNNs), as projection heads in Vision Transformers, and as feed-forward networks in Transformer blocks. Understanding the internal representations learned by MLPs thus provides a gateway to interpreting broader classes of neural architectures. Moreover, in safety-critical applications such as medical diagnosis, financial risk assessment, and autonomous systems, the ability to interpret MLP decisions is highly important.

The challenge of neural network interpretability has two complementary research directions. The first employs post-hoc explanation methods that approximate network behaviour through simpler, interpretable models. Techniques such as LIME~\cite{lime} and SHAP~\cite{NIPS2017_7062} construct local linear approximations to explain individual predictions, while attribution methods like Layer-wise Relevance Propagation~\cite{Montavon2019} distribute prediction scores backwards through the network to quantify feature importance. The second direction, which we pursue in this work, seeks to understand the intrinsic geometry and topology of learned representations. This approach recognises that neural networks fundamentally perform a sequence of geometric transformations, progressively reorganising data to achieve linear separability in the final layer.

Recent advances in topological data analysis (TDA) have revealed profound connections between neural network expressivity and the topological properties of data transformations. Work by Saul and Arendt~\cite{saul2018machine} and Pearson~\cite{https://doi.org/10.1155/2013/486363} established the foundational paradigm of using TDA for machine learning explanations, demonstrating how the Mapper algorithm can construct topological representations that reveal both the structure of data and the decision-making processes of trained models. Naitzat et al.~\cite{JMLR:v21:20-345} demonstrated that deep networks systematically reduce the topological complexity of data representations, with each layer acting to simplify the intrinsic structure until classification becomes trivial. This observation was quantified by Suresh et al.~\cite{SURESH2024165}, showing that Betti numbers, which count topological features such as connected components, loops, and voids, decrease with network depth. From a different perspective, Grigsby and Lindsey~\cite{grigsby2021transversalitybenthyperplanearrangements} analysed how ReLU networks partition input space through hyperplane arrangements, revealing geometric constraints on representational capacity. These topological insights have been formalised into bounds by Lee et al.~\cite{lee2023datatopologydependentupperbounds, lee2024definingneuralnetworkarchitecture}, who established relationships between dataset topology and the minimal network width required for successful learning.

Our work advances this topological perspective by introducing a comprehensive framework for tracking and analysing the evolution of data representations across all layers of an MLP. While previous approaches have primarily focused on comparing input and output topologies or analysing individual layers alone, we develop a unified mathematical framework that captures the continuous transformation of topological features throughout the network. Our key innovation lies in constructing a \emph{simplicial tower}, a sequence of simplicial complexes connected by simplicial maps, that encodes the topological structure at each layer while preserving the relationships induced by the network's transformations.

This construction enables several contributions. First, we formalise the relationship between consecutive layer representations through pullback covers and their associated nerve complexes, providing a rigorous foundation for topological analysis of neural networks. Second, we develop a bi-persistence framework that simultaneously tracks two types of topological evolution: \emph{layer persistence}, which captures how topological features evolve as we vary the scale parameter at a fixed layer, and \emph{MLP persistence}, which reveals how features transform as data flows through the network at a fixed scale. This bi-persistence approach, inspired by the Multiscale Mapper framework~\cite{10.5555/2884435.2884506}, provides complementary views of network behaviour that together yield a comprehensive understanding of how MLPs organise data. Third, we establish stability theorems that guarantee our topological descriptors remain robust under perturbations of the covering choices, ensuring that our insights reflect genuine properties of the learned representations rather than artefacts of our analysis. Fourth, we introduce a trajectory-based visualisation that tracks how individual data points and clusters flow through the network's latent spaces, transforming abstract topological concepts into interpretable geometric narratives. Finally, we develop a practical combinatorial algorithm for computing MLP persistence that avoids the computational challenges of explicitly constructing pullback covers, making our framework applicable to real-world neural networks.

Our approach distinguishes itself from existing work in several ways. Unlike the binary analysis of topological simplification in~\cite{JMLR:v21:20-345}, we capture the complete range of topological transformations across all layers. While~\cite{SURESH2024165} tracks global invariants such as Betti numbers, our framework preserves the detailed structure of how specific topological features persist, merge, or disappear. In contrast to theoretical limitations on network expressiveness~\cite{johnson2018deepskinnyneuralnetworks}, we provide constructive tools for understanding existing architectures. Furthermore, our explicit modelling of relationships between layers through simplicial maps offers a different analysis to those based on activation patterns~\cite{lacombe2021topologicaluncertaintymonitoringtrained}. Finally, \cite{saul2018machine} analyses trained models as black boxes; they only need the input data and final prediction probabilities. Their method works with any trained classifier without needing access to internal representations.
However, our approach performs white-box analysis of the internal architecture, specifically requiring access to all intermediate layer representations in MLPs. 

The practical implications of our framework extend beyond theoretical understanding. By revealing how MLPs progressively organise data, our approach can inform architecture design, suggest optimal layer widths, and identify redundant transformations. The trajectory analysis provides intuitive visualisations for understanding classification decisions, potentially enhancing trust in safety-critical applications. Moreover, the mathematical rigour of our topological framework opens new avenues for proving properties about neural network behaviour and establishing formal guarantees about their decision-making processes.

The remainder of this paper is organised as follows. Section~\ref{sec:background} establishes the necessary mathematical foundations from both machine learning and algebraic topology. Section~\ref{sec:methodology_latent} introduces our construction of simplicial complexes from latent representations and proves key results relating topological and geometric properties. Section~\ref{sec:methodology_persitence} develops the simplicial tower framework and our bi-persistence analysis, including stability theorems and computational algorithms. Section~\ref{sec:rel_mapper} formally connects our approach to the Multiscale Mapper framework. Section~\ref{sec:methodology_trajectories} presents our trajectory-based visualisation methodology. Section~\ref{sec:experiments} demonstrates the practical application of our framework through synthetic examples and real-world medical data. Finally, Section~\ref{sec:conclusions} discusses limitations and future research directions.

\section{Background}\label{sec:background}

This section establishes the mathematical foundations for our topological analysis of MLPs. We begin with the machine learning framework (Section~\ref{subsec:ml_framework}), introduce the necessary topological constructions (Section~\ref{subsec:topo_concepts}), present the theory of persistent homology (Section~\ref{subsec:persistent}), and conclude with the Mapper framework that inspires our approach (Section~\ref{subsec:mapper}).

\subsection{Machine Learning Framework}\label{subsec:ml_framework}

We formalise the classification problem and multilayer perceptron architecture that form the basis of our analysis.

\begin{definition}[Classification Problem]
A classification problem consists of a triple $(X, C, \lambda)$ where:
\begin{itemize}
    \item $X \subset \mathbb{R}^d$ is a finite dataset (point cloud),
    \item $C$ is a finite set of class labels,
    \item $\lambda: X \to C$ is the ground truth labeling function.
\end{itemize}
The goal is to learn a function $f: \mathbb{R}^d \to C$ that approximates $\lambda$ on $X$ and generalizes to unseen data.
\end{definition}

While our framework extends to multi-class settings, we focus primarily on binary classification where $C = \{0, 1\}$. The fundamental architecture we analyze is the multilayer perceptron:

\begin{definition}[Multilayer Perceptron]
A multilayer perceptron (MLP) with $m$ hidden layers is a function $F: \mathbb{R}^{n_0} \to \mathbb{R}$ defined as the composition:
\[
F = f_{m+1} \circ f_m \circ \cdots \circ f_1
\]
where each layer function $f_i: \mathbb{R}^{n_{i-1}} \to \mathbb{R}^{n_i}$ is given by:
\[
f_i(x) = \sigma(W_i x + b_i)
\]
with weight matrix $W_i \in \mathbb{R}^{n_i \times n_{i-1}}$, bias vector $b_i \in \mathbb{R}^{n_i}$, and activation function $\sigma: \mathbb{R} \to \mathbb{R}$ applied element-wise. For the output layer, $n_{m+1} = 1$ for binary classification.
\end{definition}

\begin{notation}
Throughout this paper, we adopt the following notation:
\begin{itemize}
    \item $F_i = f_i \circ \cdots \circ f_1: \mathbb{R}^{n_0} \to \mathbb{R}^{n_i}$ denotes the function mapping inputs to the $i$-th layer representation.
    \item $F_{i,j} = f_j \circ f_{j-1} \circ \cdots \circ f_i: \mathbb{R}^{n_{i-1}} \to \mathbb{R}^{n_j}$ denotes the sub-network from layer $i$ to layer $j$.
    \item $X_i = F_i(X) \subset \mathbb{R}^{n_i}$ denotes the image of dataset $X$ at the $i$-th layer.
\end{itemize}
\end{notation}

The ultimate goal of an MLP for binary classification is to transform the input data into a linearly separable configuration:

\begin{definition}[Linear Separability]
Two sets $A, B \subset \mathbb{R}^n$ are \emph{linearly separable} if there exists a hyperplane $H = \{x \in \mathbb{R}^n : w^T x + b = 0\}$ with $w \in \mathbb{R}^n \setminus \{0\}$ and $b \in \mathbb{R}$ such that:
\begin{equation*}
\begin{split}
w^T x + b > 0 &\quad \forall x \in A \\
w^T x + b < 0 &\quad \forall x \in B      
\end{split}
\end{equation*}
\end{definition}

\subsection{Topological Constructions}\label{subsec:topo_concepts}

To analyse the topological evolution of data representations, we employ constructions from algebraic topology. For comprehensive treatments of computational topology, we refer to Dey and Wang \cite{Dey_Wang_2022} and Edelsbrunner and Harer \cite{DBLP:books/daglib/0025666}.

\begin{definition}[Tower]
Let $(A, \leq)$ be a totally ordered set. A \emph{tower} indexed by $A$ is a collection $\mathcal{T} = \{T_a\}_{a \in A}$ of objects in a category, together with morphisms $t_{a,a'}: T_a \to T_{a'}$ for all $a \leq a'$ in $A$, satisfying:
\begin{enumerate}
    \item $t_{a,a} = \text{id}_{T_a}$ for all $a \in A$,
    \item $t_{a',a''} \circ t_{a,a'} = t_{a,a''}$ for all $a \leq a' \leq a''$ in $A$.
\end{enumerate}
We say $\mathcal{T}$ has \emph{resolution} $r$ if $r\le a$ for all $a\in A$.
\end{definition}

Towers can be constructed from various mathematical objects. For our analysis, we focus on towers of topological spaces, covers, and simplicial complexes.

\begin{definition}[Cover and Cover Map]
Let $\mathcal{X}$ be a topological space.
\begin{enumerate}
    \item A \emph{cover} of $\mathcal{X}$ is a collection $\mathcal{U} = \{U_i\}_{i \in I}$ of subsets of $\mathcal{X}$ such that $\bigcup_{i \in I} U_i = \mathcal{X}$.
    \item Given covers $\mathcal{U} = \{U_i\}_{i \in I}$ and $\mathcal{V} = \{V_j\}_{j \in J}$ of $\mathcal{X}$, a \emph{map of covers} is a function $\xi: I \to J$ such that $U_i \subseteq V_{\xi(i)}$ for all $i \in I$.
\end{enumerate}
\end{definition}

\begin{remark}
Throughout this paper, we work with covers consisting of path-connected open sets, ensuring that the nerve theorem (Theorem~\ref{th:nervetheorem}) applies.
\end{remark}

\begin{definition}[Cover Tower]
A \emph{cover tower} of a topological space $X$ is a tower $\mathcal{U} = \{\mathcal{U}_a\}_{a\in A}$ where each $\mathcal{U}_a$ is a cover of $X$, together with maps of covers $u_{a,a'}: \mathcal{U}_a \to \mathcal{U}_{a'}$ for all $a \leq a'$ in $A$.
\end{definition}

Simplicial complexes provide discrete representations of topological spaces that are amenable to computation:

\begin{definition}[Simplex and Simplicial Complex] Simplices and simplicial complexes are defined:
\begin{enumerate}
    \item An \emph{abstract simplex} $\sigma$ is a finite non-empty set. If $|\sigma| = k+1$, we say $\sigma$ is a \emph{$k$-simplex} or has \emph{dimension} $k$.
    \item A \emph{simplicial complex} $K$ is a collection of abstract simplices such that:
    \begin{itemize}
        \item If $\sigma \in K$ and $\tau \subseteq \sigma$ with $\tau \neq \emptyset$, then $\tau \in K$.
        \item If $\sigma, \tau \in K$, then $\sigma \cap \tau \in K$ (including the case $\sigma \cap \tau = \emptyset$).
    \end{itemize}
\end{enumerate}
\end{definition}

\begin{definition}[Simplicial Map]
A \emph{simplicial map} $\varphi: K \to L$ between simplicial complexes is a function $\varphi: V(K) \to V(L)$ on vertices such that for every simplex $\sigma = \{v_0, \ldots, v_k\} \in K$, the image $\{\varphi(v_0), \ldots, \varphi(v_k)\}$ forms a simplex in $L$ (allowing for repeated vertices).
\end{definition}

\begin{definition}[Simplicial Tower]
A \emph{simplicial tower} is a sequence of simplicial complexes $\mathcal{K}_0, \mathcal{K}_1, \ldots, \mathcal{K}_n$ connected by simplicial maps $\varphi_i: \mathcal{K}_{i-1} \rightarrow \mathcal{K}_i$ for $i = 1, 2, \ldots, n$:
\[
\mathcal{K}_0 \xrightarrow{\varphi_1} \mathcal{K}_1 \xrightarrow{\varphi_2} \cdots \xrightarrow{\varphi_n} \mathcal{K}_n
\]
\end{definition}
\begin{definition}[Filtration]
A \emph{filtration} is a simplicial tower where each map is an inclusion, i.e., $\mathcal{K}_0 \subseteq \mathcal{K}_1 \subseteq \cdots \subseteq \mathcal{K}_n$.
\end{definition}

A crucial construction connecting covers to simplicial complexes is the nerve:

\begin{definition}[Nerve of a Cover]
The \emph{nerve} of a cover $\mathcal{U} = \{U_i\}_{i \in I}$ is the simplicial complex $N(\mathcal{U})$ where:
\begin{itemize}
    \item The vertex set is $I$,
    \item A subset $\{i_0, \ldots, i_k\} \subseteq I$ forms a $k$-simplex if and only if $\bigcap_{j=0}^k U_{i_j} \neq \emptyset$.
\end{itemize}
\end{definition}

The nerve theorem establishes when a cover's nerve captures the original space's topology:

\begin{theorem}[Nerve Theorem, Corollary 4G.3 in \cite{hatcher2002algebraic}]\label{th:nervetheorem}
Let $\mathcal{U}$ be a finite open cover of a paracompact space $\mathcal{X}$. If every non-empty intersection of sets in $\mathcal{U}$ is contractible, then $N(\mathcal{U})$ and $\mathcal{X}$ are homotopy equivalent.
\end{theorem}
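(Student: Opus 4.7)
The plan is to exhibit an explicit continuous map $\mu: \mathcal{X} \to |N(\mathcal{U})|$ and show it admits a homotopy inverse. Since $\mathcal{X}$ is paracompact and $\mathcal{U}$ is a finite open cover, there is a partition of unity $\{\phi_i\}_{i \in I}$ subordinate to $\mathcal{U}$, meaning $\mathrm{supp}(\phi_i) \subseteq U_i$ and $\sum_i \phi_i \equiv 1$. Define $\mu(x) = \sum_i \phi_i(x)\, v_i$, where $v_i$ is the vertex of $N(\mathcal{U})$ corresponding to $U_i$. This is well-defined because whenever $\phi_{i_0}(x), \ldots, \phi_{i_k}(x)$ are simultaneously positive, one has $x \in \bigcap_{j} U_{i_j}$, which is non-empty, so $\{v_{i_0}, \ldots, v_{i_k}\}$ spans a simplex in $N(\mathcal{U})$.

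The harder direction is producing a homotopy inverse. I would follow the standard \emph{blowup} (or homotopy colimit) construction: form the auxiliary space
\[
M \;=\; \bigsqcup_{\sigma \in N(\mathcal{U})} U_\sigma \times \Delta^\sigma \;\big/\; \sim,
\]
where $U_\sigma = \bigcap_{i \in \sigma} U_i$, $\Delta^\sigma$ is the geometric simplex on $\sigma$, and $\sim$ glues each face $\Delta^\tau \subseteq \partial \Delta^\sigma$ to the corresponding piece over $U_\tau$ for $\tau \subseteq \sigma$. There are natural projections $p_1 : M \to \mathcal{X}$ and $p_2 : M \to |N(\mathcal{U})|$. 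The aim is to show that both projections are homotopy equivalences, and then verify that $\mu$ is homotopic to $p_2$ followed by a homotopy inverse of $p_1$, giving the result by two-out-of-three. For $p_2$, the fiber over the interior of $\sigma$ is $U_\sigma$, contractible by hypothesis, and one invokes a theorem on fibrewise-contractible maps over paracompact bases (for instance, Dold's theorem on quasi-fibrations). For $p_1$, the fiber over $x \in \mathcal{X}$ is the geometric simplex on $\{i : x \in U_i\}$, which is contractible, and the partition of unity yields an explicit section up to homotopy.

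The main obstacle is making the coherence precise: the contractibility of each individual $U_\sigma$ is not by itself enough; one needs the contractions to be compatible as $\sigma$ varies so that they assemble into continuous data on $M$. This is typically handled by an inductive extension over the skeleta of $N(\mathcal{U})$, using the homotopy extension property of CW pairs to extend homotopies from the $(k-1)$-skeleton to the $k$-skeleton, with contractibility of the relevant $U_\sigma$ ensuring that each inductive step is unobstructed. The finiteness of $\mathcal{U}$ guarantees that this induction terminates in finitely many stages, and paracompactness of $\mathcal{X}$ is used both to obtain the partition of unity in the first step and to ensure $p_2$ has the requisite local-triviality properties needed for the fibrewise argument.
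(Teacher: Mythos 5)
The paper does not give a proof of this theorem: it is quoted as a cited result from Hatcher's \emph{Algebraic Topology} (Corollary 4G.3), so there is no proof in the paper to compare against. Evaluated against the source, your sketch reconstructs essentially Hatcher's Section 4G argument: build the Mayer--Vietoris blowup $M$ (a homotopy colimit), show $p_1:M\to\mathcal{X}$ is a homotopy equivalence using the section supplied by the partition of unity together with a fibrewise straight-line homotopy, and show $p_2:M\to|N(\mathcal{U})|$ is a homotopy equivalence by induction over skeleta. So the overall route is correct and is the standard one.

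Two points need repair. First, invoking Dold's theorem on quasi-fibrations for $p_2$ would only deliver a \emph{weak} homotopy equivalence, and since $\mathcal{X}$ is assumed paracompact but not CW, a weak equivalence cannot automatically be upgraded to a genuine one; you should drop that remark and rely solely on the skeletal induction you also name. The correct engine there is the gluing lemma for homotopy pushouts along cofibrations: $M_k$ is the pushout of $M_{k-1} \leftarrow \bigsqcup_\sigma U_\sigma\times\partial\Delta^k \rightarrow \bigsqcup_\sigma U_\sigma\times\Delta^k$ along a cofibration, while the $k$-skeleton of $|N(\mathcal{U})|$ is the pushout of the analogous diagram with the $U_\sigma$ factors collapsed; the three comparison maps are homotopy equivalences (inductive hypothesis on $M_{k-1}$, contractibility of each $U_\sigma$ for the other two), so the gluing lemma gives that $M_k\to k\text{-skeleton}$ is one too, and finiteness of the cover terminates the induction. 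Second, the ``coherence of contractions'' obstacle you flag is a red herring: the gluing lemma needs only that each projection $U_\sigma\times\Delta^k\to\Delta^k$ be a homotopy equivalence individually, which contractibility of $U_\sigma$ supplies; no compatible choice of contractions across faces, and no homotopy-extension bookkeeping of that kind, is required, since the relevant cube diagrams commute strictly by construction. With those adjustments the sketch is a faithful reconstruction of the cited proof.
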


The following results, established in \cite{Dey_Wang_2022}, relates map of covers and simplicial maps:

\begin{proposition}[Proposition 9.1 in~\cite{Dey_Wang_2022}]\label{prop:induced_simp}
Let $\mathcal{U} = \{U_i\}_{i \in I}$ and $\mathcal{V} = \{V_j\}_{j \in J}$ be two covers of a topological space $\mathcal{X}$. Let $\xi: I \to J$ be a map of covers from $\mathcal{U}$ to $\mathcal{V}$. Then, $\xi$ induces a simplicial map $N(\xi): N(\mathcal{U}) \to N(\mathcal{V})$ between the nerve complexes defined as follows: for each vertex $i \in N(\mathcal{U})$, $N(\xi)(i) = \xi(i) \in N(\mathcal{V})$.
\end{proposition}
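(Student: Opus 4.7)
The plan is to define $N(\xi)$ on vertices by the prescribed formula $N(\xi)(i) = \xi(i)$ and then verify that this vertex-level map extends to a simplicial map, i.e.\ that it sends simplices of $N(\mathcal{U})$ to simplices of $N(\mathcal{V})$. Well-definedness on vertices is immediate: the vertex set of $N(\mathcal{U})$ is $I$, that of $N(\mathcal{V})$ is $J$, and $\xi$ is by hypothesis a function $I \to J$.

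The core step is the simplicial condition. I would take an arbitrary $k$-simplex $\{i_0,\dots,i_k\} \in N(\mathcal{U})$, which by the definition of the nerve means $\bigcap_{\ell=0}^{k} U_{i_\ell} \neq \emptyset$. Since $\xi$ is a map of covers, $U_{i_\ell} \subseteq V_{\xi(i_\ell)}$ for each $\ell$, so monotonicity of intersection under containment gives
\[
\emptyset \;\neq\; \bigcap_{\ell=0}^{k} U_{i_\ell} \;\subseteq\; \bigcap_{\ell=0}^{k} V_{\xi(i_\ell)}.
\]
Hence $\bigcap_{\ell=0}^{k} V_{\xi(i_\ell)} \neq \emptyset$, so the set $\{\xi(i_0),\dots,\xi(i_k)\}$ (viewed as a multiset, collapsing any repeated indices) is a simplex of $N(\mathcal{V})$. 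This matches exactly the ``allowing for repeated vertices'' clause in the definition of simplicial map, which I want to invoke explicitly because $\xi$ need not be injective: two distinct indices $i_\ell \neq i_{\ell'}$ may well satisfy $\xi(i_\ell) = \xi(i_{\ell'})$, and the simplex definition used in the paper accommodates this.

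Finally, I would note that the assignment $i \mapsto \xi(i)$ is a bona fide function on the vertex sets, so the induced map on simplices is automatically well-defined, and no further compatibility (such as commutation with face maps) needs to be checked beyond what was just shown, since a simplicial map is determined by its action on vertices. I do not anticipate any genuine obstacle: the only subtlety is remembering that the image simplex may have fewer vertices than the domain simplex when $\xi$ collapses indices, which is why the paper's definition of simplicial map is stated in the ``with repetitions allowed'' form. The argument is a one-line consequence of the defining property $U_i \subseteq V_{\xi(i)}$ of a map of covers.
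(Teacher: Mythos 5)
Your proof is correct and is the standard argument for this fact; the paper itself does not reproduce a proof but simply cites Proposition 9.1 of Dey and Wang, whose argument is the same one-step containment observation $\emptyset \neq \bigcap_\ell U_{i_\ell} \subseteq \bigcap_\ell V_{\xi(i_\ell)}$ that you give. Your explicit remark that non-injectivity of $\xi$ is absorbed by the ``repeated vertices allowed'' clause of the simplicial-map definition is a useful clarification and exactly the right thing to flag.
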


\begin{proposition}[Proposition 9.2 in~\cite{Dey_Wang_2022}]\label{prop:same_mapcov}
Let $\mathcal{U} = \{U_i\}_{i \in I}$ and $\mathcal{V} = \{V_j\}_{j \in J}$ be two covers of a topological space $\mathcal{X}$. Let $\xi_1, \xi_2: I \to J$ be two maps of covers from $\mathcal{U}$ to $\mathcal{V}$. Then, the simplicial maps $N(\xi_1), N(\xi_2): N(\mathcal{U}) \to N(\mathcal{V})$ induced by $\xi_1$ and $\xi_2$ respectively are contiguous.
\end{proposition}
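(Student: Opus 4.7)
The plan is to verify directly the definition of contiguity: two simplicial maps $\varphi,\psi:K\to L$ are contiguous when, for every simplex $\sigma=\{v_0,\ldots,v_k\}\in K$, the image set $\{\varphi(v_0),\ldots,\varphi(v_k),\psi(v_0),\ldots,\psi(v_k)\}$ is a simplex of $L$. Since Proposition~\ref{prop:induced_simp} already gives a concrete description of $N(\xi_1)$ and $N(\xi_2)$ at the vertex level, the problem reduces to a purely combinatorial statement about intersections of cover elements.

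First I would unfold what a simplex of $N(\mathcal{U})$ is: a subset $\{i_0,\ldots,i_k\}\subseteq I$ is a simplex precisely when $U_{i_0}\cap\cdots\cap U_{i_k}\neq\emptyset$. Pick any such simplex and any witness point $x\in U_{i_0}\cap\cdots\cap U_{i_k}$. Next, I would invoke the defining property of a map of covers twice: since $U_{i_\ell}\subseteq V_{\xi_1(i_\ell)}$ and $U_{i_\ell}\subseteq V_{\xi_2(i_\ell)}$ for every $\ell$, the same point $x$ lies in $V_{\xi_1(i_\ell)}$ and $V_{\xi_2(i_\ell)}$ for all $\ell=0,\ldots,k$.

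Consequently the intersection
\[
\Bigl(\bigcap_{\ell=0}^{k} V_{\xi_1(i_\ell)}\Bigr)\cap\Bigl(\bigcap_{\ell=0}^{k} V_{\xi_2(i_\ell)}\Bigr)
\]
is non-empty, which by the definition of the nerve means that the (possibly repeated) vertex set $\{\xi_1(i_0),\ldots,\xi_1(i_k),\xi_2(i_0),\ldots,\xi_2(i_k)\}$ spans a simplex of $N(\mathcal{V})$. Translating back through Proposition~\ref{prop:induced_simp}, this is exactly the contiguity condition for $N(\xi_1)$ and $N(\xi_2)$ evaluated on $\sigma$. Since $\sigma$ was arbitrary, contiguity holds everywhere.

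I do not expect any serious obstacle here; the argument is essentially a one-line use of a common witness point. The only subtlety worth mentioning is the convention that simplicial maps are allowed to collapse vertices (as explicitly recorded in the excerpt's definition of simplicial map), so even if $\xi_1$ and $\xi_2$ agree on some indices, or if some of the images coincide, the resulting multiset still legitimately spans a simplex of $N(\mathcal{V})$, and the argument goes through unchanged.
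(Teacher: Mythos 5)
Your argument is correct and is the standard witness-point proof of this fact; note that the paper itself does not reprove this proposition but simply cites it from Dey and Wang, and your proof matches the one given there. The only small wording nit is that you do not actually need the displayed double intersection to be nonempty in full generality — nonemptiness of that intersection is exactly what the common witness $x$ gives you, and that is precisely the nerve condition — so the argument is complete as written.
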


\subsection{Persistent Homology and Stability}\label{subsec:persistent}

Persistent homology provides algebraic tools to track topological features across a parameterised family of spaces. We work with homology with coefficients in a field $\mathbb{F}$ (typically $\mathbb{F} = \mathbb{Z}_2$).

\begin{definition}[Persistent Homology]
Given a simplicial tower $\mathcal{K}_0 \xrightarrow{\varphi_1} \mathcal{K}_1 \xrightarrow{\varphi_2} \cdots \xrightarrow{\varphi_n} \mathcal{K}_n$ and a field $\mathbb{F}$, the $p$-th persistent homology is the sequence of homology groups $\{H_p(\mathcal{K}_i,\mathbb{F})\}_{i=0,1,\ldots,n}$ connected by the linear maps $(\varphi_i)_*: H_p(\mathcal{K}_{i-1},\mathbb{F}) \rightarrow H_p(\mathcal{K}_i,\mathbb{F})$ induced by the simplicial maps $\varphi_i$.
\end{definition}

\begin{fact}[Fact 2.11 in~\cite{Dey_Wang_2022}]\label{prop:contiguous_simphom}
Let $K$ and $L$ be simplicial complexes and let $\varphi, \psi: K \to L$ be two contiguous simplicial maps. Then, the induced homomorphisms $\varphi_*, \psi_*: H_p(K) \to H_p(L)$ are identical for all dimensions $p$.
\end{fact}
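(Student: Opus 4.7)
The plan is to prove the statement by constructing an explicit chain homotopy between the chain maps $\varphi_\#, \psi_\#: C_\bullet(K) \to C_\bullet(L)$, which then forces the induced maps on homology to coincide. Recall that the notion of contiguous (not defined in the excerpt but implicit from Proposition~\ref{prop:same_mapcov}) means that for every simplex $\sigma \in K$, the set $\varphi(\sigma) \cup \psi(\sigma)$ is a simplex in $L$. This hypothesis is exactly what is needed to make the prism construction land in $L$.

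First, I would fix an ordering on the vertex set of each complex so that chain-level formulas are unambiguous, and adopt the standard convention that any ordered tuple with a repeated vertex represents the zero chain (accounting for the fact that simplicial maps are allowed to collapse vertices). Then, for a $k$-simplex $\sigma = [v_0, \ldots, v_k]$ of $K$, I would define the prism operator $P: C_k(K) \to C_{k+1}(L)$ by
\[
P(\sigma) \;=\; \sum_{i=0}^{k} (-1)^i \bigl[\varphi(v_0), \ldots, \varphi(v_i),\, \psi(v_i), \ldots, \psi(v_k)\bigr],
\]
and extend linearly. Each summand is a face of the simplex $\varphi(\sigma) \cup \psi(\sigma) \in L$, so $P(\sigma)$ is a well-defined chain in $C_{k+1}(L)$; this is precisely where contiguity is used.

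Next, I would verify the chain homotopy identity
\[
\partial \circ P \;+\; P \circ \partial \;=\; \psi_\# - \varphi_\#
\]
on each generator $\sigma$. This is the standard prism/acyclic-models computation: applying $\partial$ to $P(\sigma)$ produces two kinds of terms, those where the vertex $\varphi(v_i)$ or $\psi(v_i)$ at the ``seam'' is deleted and those where some other vertex is deleted; the seam terms telescope, leaving $\psi_\#(\sigma) - \varphi_\#(\sigma)$ plus exactly the contributions that cancel $P(\partial \sigma)$. The degenerate terms (with repeated vertices arising from vertex collapses) vanish by the convention above, so they cause no trouble.

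Finally, given any cycle $z \in Z_p(K)$, the identity yields $\psi_\#(z) - \varphi_\#(z) = \partial P(z)$, a boundary in $B_p(L)$, so $[\psi_\#(z)] = [\varphi_\#(z)]$ in $H_p(L)$; hence $\varphi_* = \psi_*$ in every dimension $p$. The main obstacle is the sign-bookkeeping in the prism computation, which is routine but tedious; once the telescoping of the seam terms is tracked carefully, the identity falls out.
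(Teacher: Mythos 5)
The paper does not actually prove this statement: it quotes it verbatim as Fact 2.11 of Dey and Wang and uses it as a citation, so there is no in-paper proof to compare against. Your proposal supplies the standard textbook argument, and it is correct. Contiguity ($\varphi(\sigma)\cup\psi(\sigma)$ spans a simplex of $L$ for every $\sigma\in K$) is exactly what guarantees that each summand of the prism chain $P(\sigma)$ lies in $C_{k+1}(L)$; the ordered-chain convention (tuples with a repeated vertex are declared zero) correctly handles the possibility that $\varphi$ or $\psi$ collapse vertices; and the identity $\partial P + P\partial = \psi_\# - \varphi_\#$ telescopes as you describe, with the seam deletions $j=i$ cancelling between consecutive $i$ except for the extreme terms $\psi_\#(\sigma)$ (from $i=0$) and $-\varphi_\#(\sigma)$ (from $i=k$), while the off-seam deletions match $-P(\partial\sigma)$ term by term. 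From the chain homotopy, $\psi_\#(z)-\varphi_\#(z)=\partial P(z)$ for any cycle $z$, so $\varphi_*=\psi_*$ on $H_p$ for every $p$. The only thing I would make explicit when writing it up is the definition of the boundary operator on ordered chains and a one-line check that $P$ is well defined independent of the chosen vertex ordering up to the usual sign, but these are routine; the argument as outlined has no gaps.
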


\begin{definition}[Persistence Diagram]
Given a simplicial tower $\mathcal{K}_0 \xrightarrow{\varphi_1} \mathcal{K}_1 \xrightarrow{\varphi_2} \cdots \xrightarrow{\varphi_n} \mathcal{K}_n$ and its $p$-th persistent homology module, the persistence diagram $\text{Dgm}_p(\mathcal{K})$ is a multiset of points in $\overline{\mathbb{R}}^2$ where each point $(i,j)$ represents a $p$-dimensional homology class born at index $i$ and dying at index $j$.
\end{definition}

\begin{definition}[Bottleneck Distance]
Given two persistence diagrams $D_1$ and $D_2$, the bottleneck distance $d_b(D_1, D_2)$ is:
\[
d_b(D_1, D_2) = \inf_{\gamma} \sup_{p \in D_1} \|p - \gamma(p)\|_{\infty}
\]
where $\gamma$ ranges over all bijections between $D_1$ and $D_2$ (extended with points on the diagonal).
\end{definition}

The stability theorem for persistent homology, established by Cohen-Steiner et al. \cite{cohen2007stability}, guarantees robustness:

\begin{theorem}[Stability of Persistent Homology \cite{cohen2007stability}]\label{th:persistence_stab}
Let $f, g: X \to \mathbb{R}$ be two tame functions. Then:
\[
d_b(\text{Dgm}(f), \text{Dgm}(g)) \leq \|f - g\|_{\infty}
\]
\end{theorem}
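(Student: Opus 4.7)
The plan is to prove the theorem by reducing it to an interleaving argument at the level of sublevel set filtrations. Set $\varepsilon = \|f - g\|_\infty$. The pointwise inequality $|f(x) - g(x)| \le \varepsilon$ for all $x \in X$ translates into the set-theoretic containments $f^{-1}(-\infty, t] \subseteq g^{-1}(-\infty, t + \varepsilon]$ and $g^{-1}(-\infty, t] \subseteq f^{-1}(-\infty, t + \varepsilon]$ for every $t \in \mathbb{R}$. Applying $H_p(-; \mathbb{F})$ yields, for each dimension $p$, an $\varepsilon$-interleaving between the persistence modules $V^f_t = H_p(f^{-1}(-\infty, t])$ and $V^g_t = H_p(g^{-1}(-\infty, t])$, i.e., families of linear maps $\phi_t : V^f_t \to V^g_{t + \varepsilon}$ and $\psi_t : V^g_t \to V^f_{t + \varepsilon}$ induced by the inclusions that commute with the internal structure maps of each module and whose compositions $\psi_{t+\varepsilon} \circ \phi_t$ and $\phi_{t+\varepsilon} \circ \psi_t$ coincide with the $2\varepsilon$-shift maps inside $V^f$ and $V^g$ respectively.

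Next, I would convert this interleaving into a matching of persistence diagrams. Tameness of $f$ and $g$ guarantees that $V^f$ and $V^g$ have only finitely many critical values and finite-dimensional levels, so each module decomposes as a direct sum of interval summands and the diagrams $\mathrm{Dgm}(f)$, $\mathrm{Dgm}(g)$ are locally finite multisets in $\overline{\mathbb{R}}^2$. The algebraic stability theorem for persistence modules then asserts that any $\varepsilon$-interleaving produces a partial matching of interval summands in which matched intervals have endpoints within $\varepsilon$ of each other in the $\ell^\infty$-norm, while unmatched intervals have length at most $2\varepsilon$ and can therefore be paired with diagonal points at $\ell^\infty$-cost $\le \varepsilon$. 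Extending this partial matching to a full bijection $\gamma$ between $\mathrm{Dgm}(f)$ and $\mathrm{Dgm}(g)$ (augmented with the diagonal) gives $\sup_{p} \|p - \gamma(p)\|_\infty \le \varepsilon$, which bounds $d_b(\mathrm{Dgm}(f), \mathrm{Dgm}(g))$ by $\varepsilon$ as required.

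The main obstacle is the algebraic stability step, since the containment and interleaving parts are essentially formal. One classical route, following Cohen-Steiner, Edelsbrunner, and Harer, is to linearly interpolate $h_s = (1 - s) f + s g$ for $s \in [0, 1]$ and to track the diagram along this family, showing that under tameness it evolves by a finite sequence of vineyard transpositions of critical values, each of which moves diagram points by at most $\|f - g\|_\infty$ in the $\ell^\infty$-norm; the desired matching is then read off from the endpoints of the vines. A more modern route, via the induced matching theorem, constructs $\gamma$ directly from the interleaving by assigning each interval summand of $V^f$ to the unique interval summand of $V^g$ whose birth is determined by the shifted image under $\phi$. Either route reduces to a careful ``box lemma'' bookkeeping: a feature alive in $V^f$ over $[b, d]$ with $d - b > 2\varepsilon$ forces, through the commuting interleaving maps, a corresponding feature alive in $V^g$ over an interval whose endpoints lie within $\varepsilon$ of $b$ and $d$. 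This bookkeeping, rather than the geometric inclusion argument, is where the real work lies.
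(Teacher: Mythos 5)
The paper does not prove this theorem: it is stated as a background result and cited directly to Cohen--Steiner, Edelsbrunner, and Harer, so there is no in-paper proof to compare your attempt against. That said, your sketch is a correct outline of the modern proof. The sublevel-set containments $f^{-1}(-\infty,t]\subseteq g^{-1}(-\infty,t+\varepsilon]$ and $g^{-1}(-\infty,t]\subseteq f^{-1}(-\infty,t+\varepsilon]$ with $\varepsilon=\|f-g\|_\infty$ do yield, after applying $H_p$, an $\varepsilon$-interleaving of the two persistence modules, and you are right that the whole weight of the theorem then rests on the algebraic stability (isometry) step that converts an interleaving into a bottleneck-optimal matching. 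Identifying that step as the genuine content, rather than the formal inclusion argument, is exactly the right diagnosis.

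One caveat worth flagging is historical and methodological rather than a gap in correctness. The original Cohen--Steiner--Edelsbrunner--Harer proof did not use the interleaving language at all: it proved Hausdorff stability directly from the inclusions and then upgraded to bottleneck stability via the interpolation family $h_s=(1-s)f+sg$ together with the Box Lemma, a purely diagrammatic bookkeeping argument. The ``vineyard'' picture you invoke, where critical values undergo explicit transpositions, belongs to a separate later paper of Cohen--Steiner, Edelsbrunner, and Morozov, and the algebraic stability/induced matching theorem is due to Chazal et al.\ and to Bauer--Lesnick. So your ``two routes'' are not two phrasings of the same original argument but an older geometric proof and a more recent module-theoretic one; either suffices to close the gap you correctly locate, but attributing the vineyard mechanism to the 2007 proof is anachronistic.
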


For simplicial towers, stability is measured through interleaving:

\begin{definition}[Interleaving of Cover Towers]
Let $\mathcal{U} = \{\mathcal{U}_a\}$ and $\mathcal{V} = \{\mathcal{V}_a\}$ be two cover towers of a topological space $X$ with $\text{res}(\mathcal{U}) = \text{res}(\mathcal{V}) = r$. Given $\eta \geq 0$, we say that $\mathcal{U}$ and $\mathcal{V}$ are $\eta$-interleaved if one can find cover maps $\zeta_a : \mathcal{U}_a \to \mathcal{V}_{a+\eta}$ and $\xi_{a'} : \mathcal{V}_{a'} \to \mathcal{U}_{a'+\eta}$ for all $a, a' \geq r$.
\end{definition}

Stability for towers is established in the following results:

\begin{proposition}[Proposition 9.17 (ii) in~\cite{Dey_Wang_2022}]\label{prop:pullback_interleaved}
Let $f:X\rightarrow Z$ be a continuous function and $\mathcal{U}$ and $\mathcal{V}$ be two $\eta$-interleaved tower of covers of $Z$. Then, the pullback cover towers are also $\eta$-interleaved.
\end{proposition}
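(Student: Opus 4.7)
The plan is to reduce the statement to the elementary observation that preimages preserve inclusions, and to verify that the combinatorial data (index sets and index functions) of the two towers transfer unchanged to the pullback level. Writing $\mathcal{U} = \{\mathcal{U}_a\}_{a \geq r}$ with $\mathcal{U}_a = \{U^a_i\}_{i \in I_a}$ and transition cover maps encoded by index functions $u_{a,a'}: I_a \to I_{a'}$ satisfying $U^a_i \subseteq U^{a'}_{u_{a,a'}(i)}$ (and similarly for $\mathcal{V}$), the pullback cover tower of $\mathcal{U}$ under $f$ consists at level $a$ of $\{f^{-1}(U^a_i)\}_{i \in I_a}$, with the same index set $I_a$ and the same transition index functions $u_{a,a'}$.

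First I would verify that this is genuinely a cover tower. Each $\{f^{-1}(U^a_i)\}_{i \in I_a}$ covers $X = f^{-1}(Z)$; the transition functions $u_{a,a'}$ remain valid cover maps on pullbacks, since $U^a_i \subseteq U^{a'}_{u_{a,a'}(i)}$ immediately gives $f^{-1}(U^a_i) \subseteq f^{-1}(U^{a'}_{u_{a,a'}(i)})$; and the tower axioms (identity and composition) involve only the index functions, which are unchanged. In particular the resolutions agree at $r$, so both pullback towers satisfy the hypothesis of the interleaving definition.

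Next I would lift the interleaving itself by reusing the hypothesised interleaving index functions $\zeta_a: I_a^{\mathcal{U}} \to I_{a+\eta}^{\mathcal{V}}$ and $\xi_{a'}: I_{a'}^{\mathcal{V}} \to I_{a'+\eta}^{\mathcal{U}}$ verbatim as candidate interleaving maps between the pullback towers. The only condition requiring verification is the cover-map inclusion at the pullback level: applying $f^{-1}$ to $U^a_i \subseteq V^{a+\eta}_{\zeta_a(i)}$ yields $f^{-1}(U^a_i) \subseteq f^{-1}(V^{a+\eta}_{\zeta_a(i)})$, which is precisely what is required, and symmetrically for $\xi_{a'}$. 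By the definition of interleaving, this establishes that the two pullback cover towers are $\eta$-interleaved.

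There is no real obstacle here; the entire argument hinges on the elementary fact that $A \subseteq B$ implies $f^{-1}(A) \subseteq f^{-1}(B)$, coupled with the observation that the pullback construction acts as the identity on the combinatorial side (index sets, index functions, tower axioms, interleaving functions) and only applies $f^{-1}$ on the set-theoretic side. The only care needed is bookkeeping, keeping the two families of index sets separate and verifying that each interleaving relation, as well as the compositional structure of the original towers, transfers through $f^{-1}$ without modification.
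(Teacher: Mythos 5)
Your argument is clean and correct for the \emph{plain} pullback cover, where the level-$a$ cover of $X$ is $\{f^{-1}(U^a_i)\}_{i\in I_a}$ with the same index sets and index functions as the original tower. In that setting ``preimages preserve inclusions'' is indeed the entire content, and reusing $\zeta_a,\xi_{a'}$ verbatim finishes the proof.

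However, the proposition as cited (Dey--Wang 9.17(ii)) and as actually used in this paper is about the \emph{refined} pullback, where each $f^{-1}(U^a_i)$ is replaced by its set of connected components; see the paper's Equation~(\ref{eq:pullback_cover}) and the way $\rho_i$ is built in the proof of the corollary on cover towers related under a map of covers. Once you split into components, the index set of the pullback at level $a$ is no longer $I_a$ (one index may spawn several components), so the original transition functions $u_{a,a'}$ and the interleaving functions $\zeta_a,\xi_{a'}$ cannot be reused ``verbatim.'' Your proof, as written, silently proves part (i) rather than part (ii). The fix is short and you essentially already have the key inclusion: if $C$ is a connected component of $f^{-1}(U^a_i)$, then $C$ is connected and $C\subseteq f^{-1}(U^a_i)\subseteq f^{-1}(V^{a+\eta}_{\zeta_a(i)})$, so $C$ lies in a \emph{unique} connected component $D$ of $f^{-1}(V^{a+\eta}_{\zeta_a(i)})$; define the lifted interleaving map by $C\mapsto D$, and similarly lift the transition maps $u_{a,a'}$ and the reverse interleavings $\xi_{a'}$. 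One then checks that the lifted maps are cover maps by construction and satisfy the tower composition axioms (uniqueness of the containing component makes the required squares commute). So the gap is a missing level of bookkeeping at the component level, not a flaw in the core idea.
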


\begin{proposition}[Proposition 9.18 in~\cite{Dey_Wang_2022}]\label{prop:nerve_interleaved}
Let $\mathcal{U}$ and $\mathcal{V}$ be two $\eta$-interleaved cover towers of $X$ with $\text{res}(\mathcal{U}) = \text{res}(\mathcal{V})$. Then, $N(\mathcal{U})$ and $N(\mathcal{V})$ are also $\eta$-interleaved.
\end{proposition}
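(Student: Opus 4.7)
The plan is to lift the cover-level interleaving to the simplicial level using Proposition~\ref{prop:induced_simp} and to reconcile the resulting squares using Proposition~\ref{prop:same_mapcov}. First, for each cover-map $\zeta_a: \mathcal{U}_a \to \mathcal{V}_{a+\eta}$ and $\xi_{a'}: \mathcal{V}_{a'} \to \mathcal{U}_{a'+\eta}$ provided by the interleaving hypothesis, Proposition~\ref{prop:induced_simp} produces simplicial maps $N(\zeta_a): N(\mathcal{U}_a) \to N(\mathcal{V}_{a+\eta})$ and $N(\xi_{a'}): N(\mathcal{V}_{a'}) \to N(\mathcal{U}_{a'+\eta})$. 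The internal tower maps $u_{a,a'}$ and $v_{a,a'}$ likewise induce simplicial tower maps $N(u_{a,a'})$ and $N(v_{a,a'})$. These are the candidate morphisms making the nerve towers $\eta$-interleaved.

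Next, I would verify the commuting squares required by the definition of interleaving of simplicial towers. The central observation is that, for each $a \geq r$, both
\begin{equation*}
\xi_{a+\eta} \circ \zeta_a \quad \text{and} \quad u_{a,\, a+2\eta}
\end{equation*}
are maps of covers from $\mathcal{U}_a$ to $\mathcal{U}_{a+2\eta}$: one is the composition of indexing functions arriving via $\mathcal{V}_{a+\eta}$, the other is the direct internal tower map. They need not agree as index maps, but each satisfies the covering inclusion condition $U \subseteq U'$. The analogous statement holds for $\zeta_{a'+\eta} \circ \xi_{a'}$ versus $v_{a',\, a'+2\eta}$, and for the mixed squares pairing one interleaving map with one internal tower map; in every case, we obtain two distinct cover-maps between the same pair of covers.

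By Proposition~\ref{prop:same_mapcov}, any two such cover-maps induce contiguous simplicial maps on the corresponding nerves. Hence each of the required squares commutes up to contiguity of simplicial maps, which is the appropriate notion of interleaving at the nerve level: by Fact~\ref{prop:contiguous_simphom}, all downstream homological computations are insensitive to this ambiguity, so this level of strictness is sufficient for the stability applications that motivate the proposition (together with Proposition~\ref{prop:pullback_interleaved}).

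I expect the main obstacle to be notational rather than conceptual: pinning down the precise meaning of ``$\eta$-interleaved simplicial towers'' in this framework (strict commutation versus commutation up to contiguity), and then systematically checking all four classes of squares, namely the two expressing the interleaving axioms between the towers and the two expressing compatibility with the internal tower maps of $\mathcal{U}$ and $\mathcal{V}$. Once the definition is fixed, the proof reduces to combining Proposition~\ref{prop:induced_simp} (to lift each cover-map to a simplicial map) with Proposition~\ref{prop:same_mapcov} (to reconcile the two cover-maps appearing in each square).
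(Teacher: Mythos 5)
This proposition is cited by the paper from Dey and Wang (Proposition 9.18) without its own proof, so there is no in-paper argument to compare against; but your proof is correct and matches the standard approach taken in that reference. The two ingredients you identify are exactly the right ones: Proposition~\ref{prop:induced_simp} lifts each cover map to a simplicial map on nerves, and Proposition~\ref{prop:same_mapcov} (together with Fact~\ref{prop:contiguous_simphom}) resolves the fact that the nerve functor only preserves cover maps up to contiguity, so the interleaving diagrams commute up to contiguity rather than strictly. You also correctly flag the one genuine subtlety, namely that ``$\eta$-interleaved'' for simplicial towers must be read as commutation up to contiguity — this is indeed how Dey and Wang set up the notion, precisely so that passing to homology (where contiguous maps are identified) yields a strict interleaving of persistence modules and hence the bottleneck bound in Theorem~\ref{th:simptower_stab}. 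The only thing worth tightening in a final write-up is to state the cover-tower interleaving axioms you are invoking explicitly (the paper's Definition of interleaving of cover towers only records the existence of the maps $\zeta_a$ and $\xi_{a'}$ and leaves the compatibility with the internal tower maps implicit), since your four classes of squares are checking exactly those implicit conditions.
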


\begin{theorem}[Theorem 4.3 in~\cite{Dey_Wang_2022}]\label{th:simptower_stab}
Let $K$ and $L$ be two simplicial towers and $V_K$ and $V_L$ their homology towers, respectively, that are tame. Then:
\[
d_b(\text{Dgm}(V_K), \text{Dgm}(V_L)) \leq d_I(K, L)
\]
\end{theorem}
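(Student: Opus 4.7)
\textbf{Proof plan for Theorem~\ref{th:simptower_stab}.}

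The overall strategy is to lift the interleaving from the simplicial-tower level to the persistence-module level by applying the homology functor, and then invoke the algebraic stability (isometry) theorem for persistence modules, which bounds bottleneck distance by interleaving distance. The key subtlety is that interleavings of simplicial towers only commute \emph{up to contiguity}, whereas algebraic stability needs strict commutativity; Fact~\ref{prop:contiguous_simphom} is exactly what bridges this gap.

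First I would fix any $\eta > d_I(K,L)$ and unpack the definition: there exist simplicial maps $\Phi_a : K_a \to L_{a+\eta}$ and $\Psi_a : L_a \to K_{a+\eta}$ together with the internal tower maps $k_{a,a'}: K_a \to K_{a'}$ and $\ell_{a,a'}: L_a \to L_{a'}$ such that the four composite relations making up an $\eta$-interleaving hold up to contiguity, namely $\Psi_{a+\eta} \circ \Phi_a$ is contiguous to $k_{a,a+2\eta}$, $\Phi_{a+\eta} \circ \Psi_a$ is contiguous to $\ell_{a,a+2\eta}$, and similarly for the mixed squares comparing $\Phi$ (resp.\ $\Psi$) with the forward maps of the opposite tower. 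Next, I would apply the $p$-th homology functor $H_p(-;\mathbb{F})$ to all these complexes and maps, producing linear maps
\[
\phi_a := (\Phi_a)_* : H_p(K_a) \to H_p(L_{a+\eta}), \qquad \psi_a := (\Psi_a)_* : H_p(L_a) \to H_p(K_{a+\eta}),
\]
together with the internal module maps of $V_K$ and $V_L$.

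The crucial step is then to invoke Fact~\ref{prop:contiguous_simphom}: since contiguous simplicial maps induce identical homomorphisms on homology, every contiguity relation from the previous paragraph becomes a strict equality of linear maps. This upgrades the up-to-contiguity interleaving of simplicial towers into a genuine $\eta$-interleaving of the persistence modules $V_K$ and $V_L$ in the standard algebraic sense.

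Finally, I would apply the algebraic stability (isometry) theorem for tame persistence modules, which states that $d_b(\mathrm{Dgm}(V_K), \mathrm{Dgm}(V_L)) \le d_I(V_K, V_L)$ whenever both modules are tame. Combined with the previous step, this gives $d_b(\mathrm{Dgm}(V_K),\mathrm{Dgm}(V_L)) \le \eta$ for every $\eta > d_I(K,L)$, and letting $\eta \downarrow d_I(K,L)$ yields the desired inequality. I expect the main obstacle to be the bookkeeping in the second step: one must check all four interleaving squares (not just the two obvious ones) at the simplicial level and verify that each reduces to a strict commutative square on homology via Fact~\ref{prop:contiguous_simphom}; once that is done, the algebraic isometry theorem does the rest.
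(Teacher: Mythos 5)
The paper does not actually prove this theorem; it cites it directly as Theorem~4.3 of Dey and Wang, so there is no in-paper proof to compare against. That said, your sketch is exactly the standard argument used to establish this result in the reference: unpack an $\eta$-interleaving of simplicial towers (whose commutativity conditions only hold up to contiguity), push it through the homology functor, use Fact~\ref{prop:contiguous_simphom} to convert each contiguity relation into a strict equality of induced linear maps, thereby obtaining a genuine $\eta$-interleaving of the persistence modules $V_K$ and $V_L$, and then conclude via the algebraic stability (isometry) theorem for tame persistence modules, finally letting $\eta$ tend down to $d_I(K,L)$. You have correctly identified the one genuinely non-obvious step, namely that contiguity is strong enough because homology does not distinguish contiguous maps. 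The only thing worth flagging is that you should confirm you are using the right notion of interleaving for simplicial towers (the one where the interleaving squares and triangles commute up to contiguity, not strictly), since the paper only spells out the cover-tower version explicitly; with that convention fixed, every square you enumerate does indeed collapse to a strictly commuting square of linear maps after applying $H_p$, and the algebraic stability theorem applies.
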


\subsection{Mapper and Multiscale Mapper}\label{subsec:mapper}

The Mapper algorithm, introduced by Singh et al. \cite{Singh2007}, provides a framework for visualising high-dimensional data through topological lenses and has been widely applied in different contexts~\cite{madukpe2025comprehensivereviewmapperalgorithm}.

\begin{definition}[Mapper]
Given a topological space $\mathcal{X}$, a continuous function $f: \mathcal{X} \rightarrow \mathcal{Z}$ to a parameter space $\mathcal{Z}$, and a finite cover $\mathcal{U} = \{U_\alpha\}_{\alpha \in A}$ of $\mathcal{Z}$, the Mapper construction consists of:
\begin{enumerate}
    \item Taking the pullback cover $f^*(\mathcal{U})$ of $\mathcal{X}$, where $f^*(\mathcal{U}) = \{f^{-1}(U_\alpha)\}_{\alpha \in A}$.
    \item Computing the nerve of the refined pullback cover, where each $f^{-1}(U_\alpha)$ is replaced by its connected components:
    $$\mathcal{U}_f = \{\text{Connected components of }f^{-1}(U_\alpha) \mid \alpha \in A\}$$
    \item The Mapper of $(\mathcal{X}, f, \mathcal{U})$ is defined as:
    $$\mathrm{M}(\mathcal{X}, f, \mathcal{U}) = \mathrm{N}(\mathcal{U}_f)$$
\end{enumerate}
\end{definition}

To address sensitivity to cover choice, Dey et al. \cite{10.5555/2884435.2884506} introduced Multiscale Mapper:

\begin{definition}[Multiscale Mapper]
Given a continuous function $f: \mathcal{X} \rightarrow \mathcal{Z}$ and a cover tower $\mathcal{U} = \{\mathcal{U}_\varepsilon\}_{\varepsilon \in A}$ of $\mathcal{Z}$, the Multiscale Mapper is the tower of simplicial complexes:
$$\mathrm{MM}(\mathcal{U}, f) = \{N(f^*(\mathcal{U}_\varepsilon))\}_{\varepsilon \in A}$$
connected by the simplicial maps induced by the maps of covers of the pullback cover tower $f^*(\mathcal{U})$.
\end{definition}

This construction provides the conceptual foundation for our analysis of neural networks, where we view each layer transformation as inducing a Mapper-type construction.

\begin{proposition}[Proposition 9.13 in~\cite{Dey_Wang_2022}]\label{prop:induce_mapcov}
Let $f: X\rightarrow Z$, and $\mathcal{U}$ and $\mathcal{V}$ be two covers of $Z$ with a map of covers $\xi:\mathcal{U}\rightarrow \mathcal{V}$. Then, there is a corresponding map of covers between their respective pullback covers.
\end{proposition}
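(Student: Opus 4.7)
The plan is to take the map between pullback covers to be literally the same index-level function $\xi$, and verify that the set inclusions defining a map of covers transfer from $Z$ to $X$ for free under preimage. Write $\mathcal{U} = \{U_i\}_{i \in I}$ and $\mathcal{V} = \{V_j\}_{j \in J}$, so the hypothesis that $\xi: I \to J$ is a map of covers reads $U_i \subseteq V_{\xi(i)}$ for every $i \in I$. The pullback covers $f^*(\mathcal{U}) = \{f^{-1}(U_i)\}_{i \in I}$ and $f^*(\mathcal{V}) = \{f^{-1}(V_j)\}_{j \in J}$ are indexed by the very same sets $I$ and $J$, so it makes sense to reuse $\xi$ as an index function between them.

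The second step is the only substantive verification: I would check that $f^{-1}(U_i) \subseteq f^{-1}(V_{\xi(i)})$ for every $i \in I$. This is immediate from the monotonicity of preimage under set inclusion, applied to the hypothesis $U_i \subseteq V_{\xi(i)}$. By the definition of map of covers recalled in Section~\ref{subsec:topo_concepts}, this is precisely what is needed for $\xi$ to be a map of covers $f^*(\mathcal{U}) \to f^*(\mathcal{V})$.

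Finally, for completeness I would record that $f^*(\mathcal{U})$ is indeed a cover of $X$, so the statement is non-vacuous: since $\bigcup_i U_i = Z \supseteq f(X)$, one has $\bigcup_i f^{-1}(U_i) = f^{-1}\bigl(\bigcup_i U_i\bigr) \supseteq f^{-1}(f(X)) \supseteq X$, and similarly for $\mathcal{V}$. There is essentially no obstacle in the argument; it is a formal consequence of preimage commuting with unions and preserving inclusions. The only point where care would be required is if one instead worked with the connected-component refinement used in the Mapper construction, in which case one would have to promote $\xi$ from indices $I \to J$ to a function on components by sending each connected component $C$ of $f^{-1}(U_i)$ to the unique connected component of $f^{-1}(V_{\xi(i)})$ containing $C$; uniqueness follows because components partition the containing set. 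With the unrefined pullback used in the excerpt, no such choice is needed.
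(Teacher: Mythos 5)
Your mathematical content is all there, but you have the two cases in the wrong order: the argument you relegate to a closing remark is the one the paper actually needs. In this paper $f^*(\mathcal{U})$ denotes the \emph{refined} pullback cover, i.e.\ the cover by connected components of preimages rather than the raw preimage cover $\{f^{-1}(U_i)\}_{i\in I}$ --- this is exactly how the paper defines the pullback in Equation~(\ref{eq:pullback_cover}), and it matches step~(2) of the Mapper construction. You can see this is the intended reading by looking at where Proposition~\ref{prop:induce_mapcov} is invoked: in the proof of the corollary ``Cover towers related under a map of covers'', the induced cover map $\rho_i$ sends a connected component $U$ of $f_{i+1}^{-1}(U')$ to the connected component of $f_{i+1}^{-1}(\rho_{i+1}(U'))$ containing it, which is precisely the construction you mention only at the end.

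So the body of your proof should be what you currently append ``for completeness'': for each connected component $C$ of $f^{-1}(U_i)$, the hypothesis $U_i \subseteq V_{\xi(i)}$ and monotonicity of preimages give $C \subseteq f^{-1}(U_i) \subseteq f^{-1}(V_{\xi(i)})$; since $C$ is connected, there is a unique connected component $C'$ of $f^{-1}(V_{\xi(i)})$ containing it (uniqueness because the components partition $f^{-1}(V_{\xi(i)})$), and one sets $C \mapsto C'$. The containment $C \subseteq C'$ is exactly the condition required of a map of covers, so this defines $f^*(\xi)$. One small point to make explicit when tidying this up: the index set of the refined pullback should be taken as pairs $(i, C)$, where $C$ ranges over components of $f^{-1}(U_i)$, so that the assignment $\bigl(i,C\bigr) \mapsto \bigl(\xi(i), C'\bigr)$ is unambiguously a function; otherwise a single subset of $X$ that happens to arise as a component for two different $i$'s would be ill-handled. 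Your leading argument --- reusing $\xi$ verbatim on the unrefined preimage cover --- is also true, but it is an immediate tautology and not the proposition being cited.
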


\section{Proposed methodology}\label{sec:methodology}
In the following, we consider an MLP $F$ defined as the composition of $m$ hidden layers $\{\mathbb{R}^{n_{i-1}}\xrightarrow{f_i}\mathbb{R}^{n_i}\}_{i=1}^{m+1}$ with $\mathbb{R}^{n_{m+1}}=\mathbb{R}$. Then, $F_i: \mathbb{R}^{n_0} \to \mathbb{R}^{n_i}$ denotes the function mapping inputs to their representations at the $i$-th layer and $X_i = F_i(X)$ represents the image of the dataset at the $i$-th layer. Besides, $\mathcal{U}_i = \{U_j\}_{j \in J_i}$ represents a cover of $X_i$ with index set $J_i$. 

The organisation is as follows: in Section~\ref{sec:methodology_latent} we describe and provide results about the nerve complex of the latent representation of a given layer. Then, in Section~\ref{sec:methodology_persitence}, we describe how to relate the topological structure of each layer so that we can obtain simplicial towers and then compute persistent homology. In Section~\ref{sec:rel_mapper}, we relate our results to the Multiscale mapper. Finally, in Section~\ref{sec:methodology_trajectories}, we describe how we can track specific trajectories to visualise and provide interpretability to the decision-making of MLPs.

\subsection{Simplicial-based latent representation}\label{sec:methodology_latent}

To analyse the topological structure of neural network representations, we must transform the latent space at each layer into a topological object that captures meaningful structure. Given a dataset $X$ and its representation $X_i = F_i(X)$ at the $i$-th layer, we view $X_i$ as a subspace of $\mathbb{R}^{n_i}$ with the subspace topology inherited from the standard topology on $\mathbb{R}^{n_i}$. Let us consider a cover and compute its nerve complex $\mathcal{K}_i:= N(\mathcal{U}_i)$. The Nerve theorem (Theorem~\ref{th:nervetheorem}) states that under some conditions, the nerve complex is homotopy equivalent to $X_i$, and hence we can expect to recover information about the space we are interested in. For classification problems, we are particularly interested in how different classes are separated in the latent space, and under the right choice of the cover, we can keep linear separability on the nerve. To do so, we first recall that two sets $A$ and $B$ in a topological space are Hausdorff-separated if disjoint open sets $U_1$ and $U_2$ exist such that $A\subset U_1$ and $B\subset U_2$. Then, we have the following result regarding linear separability:

\begin{proposition}\label{proposition:separation}
Given two disjoint sets $A, B \subset \mathbb{R}^n$:
\begin{enumerate}
    \item If $A$ and $B$ are linearly separable in $\mathbb{R}^n$, then they are Hausdorff-separated in the subspace topology of $\mathbb{R}^n$.
    \item If $A$ and $B$ are Hausdorff-separated, then there exists a cover $\mathcal{U}$ of $X = A \cup B$ such that the nerve complex $\mathcal{K} = N(\mathcal{U})$ has two disconnected components $\mathcal{K}_A$ and $\mathcal{K}_B$ corresponding to $A$ and $B$ respectively.
\end{enumerate}
\end{proposition}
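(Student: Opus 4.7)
The plan is to handle the two parts separately, with Part 1 being essentially immediate from the definition of linear separability and Part 2 requiring an explicit construction of a cover that witnesses the disconnection of the nerve.

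For Part 1, I would take the separating hyperplane $H = \{x : w^Tx + b = 0\}$ guaranteed by linear separability and produce the two open half-spaces $U_1 = \{x \in \mathbb{R}^n : w^Tx + b > 0\}$ and $U_2 = \{x \in \mathbb{R}^n : w^Tx + b < 0\}$. These are disjoint and open in $\mathbb{R}^n$ as preimages of $(0,\infty)$ and $(-\infty,0)$ under the continuous affine functional $x \mapsto w^Tx+b$; by the definition of linear separability they contain $A$ and $B$ respectively. Intersecting with the subspace $A \cup B$ then yields the disjoint open witnesses required for Hausdorff separation.

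For Part 2, given disjoint open sets $U_1, U_2 \subset \mathbb{R}^n$ with $A \subset U_1$ and $B \subset U_2$, I would construct a cover of $X = A \cup B$ whose elements each lie entirely in $U_1$ or entirely in $U_2$. Concretely, for each $x \in A$ I pick an open Euclidean ball $B_x$ centred at $x$ whose radius is small enough that $B_x \subset U_1$ (possible because $U_1$ is open and contains $x$), and analogously for each $y \in B$ an open ball $B_y \subset U_2$. The radii can be shrunk further so that every $B_x \cap X$ and $B_y \cap X$ is path-connected, matching the convention introduced in the remark preceding the nerve theorem. Setting $\mathcal{U}_A = \{B_x \cap X\}_{x \in A}$, $\mathcal{U}_B = \{B_y \cap X\}_{y \in B}$, and $\mathcal{U} = \mathcal{U}_A \cup \mathcal{U}_B$ produces the desired cover of $X$.

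The nerve-disconnection claim then follows from a direct intersection argument: for any $U \in \mathcal{U}_A$ and $V \in \mathcal{U}_B$ we have $U \cap V \subset U_1 \cap U_2 = \emptyset$, so no simplex of $\mathcal{K} = N(\mathcal{U})$ can contain vertices from both families. Defining $\mathcal{K}_A$ and $\mathcal{K}_B$ as the full subcomplexes on $\mathcal{U}_A$ and $\mathcal{U}_B$ respectively gives a decomposition $\mathcal{K} = \mathcal{K}_A \sqcup \mathcal{K}_B$ into two disjoint subcomplexes corresponding to $A$ and $B$. The whole argument is elementary, and the main point requiring care is not analytic but organisational: bookkeeping the two constraints on the cover in Part 2 (respecting the separation by $U_1, U_2$ while remaining path-connected in the subspace topology of $X$), both of which are arranged simply by taking the ball radii small enough.
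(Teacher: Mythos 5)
Your proof is correct and follows essentially the same route as the paper: open half-spaces as the Hausdorff witnesses in Part~1, and a cover by small balls around each point contained in the respective separating open set in Part~2, with disconnection of the nerve following from $U_1 \cap U_2 = \emptyset$. The only (minor) differences are cosmetic polish: you explicitly intersect the balls with $X$ to respect the subspace topology and note the path-connectedness convention, while the paper additionally computes compactness-based bounds $\delta_A,\delta_B$ that it never actually uses.
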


\begin{proof}
We prove each statement separately.

\textbf{(1)} Suppose $A$ and $B$ are linearly separable. Then there exist $w \in \mathbb{R}^n \setminus \{0\}$ and $b \in \mathbb{R}$ such that the hyperplane $H = \{x \in \mathbb{R}^n : w^T x + b = 0\}$ satisfies: (1) $w^T x + b > 0 \quad \forall x \in A$ and (2) $ w^T x + b < 0 \quad \forall x \in B$. Then, define the continuous function $h: \mathbb{R}^n \to \mathbb{R}$ by $h(x) = w^T x + b$. Since $A$ and $B$ are disjoint and finite (as subsets of a point cloud), they are compact. Therefore: (1) $\delta_A := \min_{x \in A} h(x) > 0$ (by continuity of $h$ on compact $A$), and (2) $\delta_B := \max_{x \in B} h(x) < 0$ (by continuity of $h$ on compact $B$).

Define the open sets: $U_A := h^{-1}((0, \infty)) = \{x \in \mathbb{R}^n : h(x) > 0\}$ and $ U_B := h^{-1}((-\infty, 0)) = \{x \in \mathbb{R}^n : h(x) < 0\}$. These sets are open as preimages of open intervals under the continuous function $h$. By construction $A \subset U_A$, $B \subset U_B$, and $U_A \cap U_B = \emptyset$. Therefore, $A$ and $B$ are Hausdorff-separated.

\textbf{(2)} Suppose $A$ and $B$ are Hausdorff-separated by disjoint open sets $U_A \supset A$ and $U_B \supset B$. We construct an explicit cover that yields the desired nerve complex. Since $A$ and $B$ are finite sets (as subsets of a point cloud $X$), we can choose for each point an open neighbourhood contained entirely within the appropriate separating set: (1) for each $a \in A$, since $a \in U_A$ and $U_A$ is open, there exists $\varepsilon_a > 0$ such that the open ball $B_{\varepsilon_a}(a) \subset U_A$; and, for each $b \in B$, since $b \in U_B$ and $U_B$ is open, there exists $\varepsilon_b > 0$ such that the open ball $B_{\varepsilon_b}(b) \subset U_B$. Now, define the cover:
\[
\mathcal{U} := \{B_{\varepsilon_a}(a) : a \in A\} \cup \{B_{\varepsilon_b}(b) : b \in B\}
\]

Now, the nerve $\mathcal{K} = N(\mathcal{U})$ is the simplicial complex with vertex set $I = A \cup B$ and higher dimensional simplices $\{x_0, x_1, \ldots, x_k\} \subseteq I$ forms a $k$-simplex in $\mathcal{K}$ if and only if $\bigcap_{i=0}^k B_{\varepsilon_{x_i}}(x_i) \neq \emptyset$. Finally, no simplex in $\mathcal{K}$ can contain vertices from both $A$ and $B$. Since there are no edges between vertices in $\mathcal{K}_A$ and vertices in $\mathcal{K}_B$, the nerve complex $\mathcal{K}$ has exactly two connected components corresponding to $A$ and $B$.
\end{proof}

\subsection{Persistence on the latent representations}\label{sec:methodology_persitence}
In the following, we develop a framework to track the topological evolution of data representations across network layers by computing the persistent homology of simplicial towers associated with the MLP. 

\subsubsection{From latent representations to a sequence of covers}

To study how data representations evolve across network layers, we need to establish relationships between the topological structures at each layer. Let us define a relation between the index sets of consecutive covers:
\begin{equation}\label{eq:relation}
R_i \subseteq J_{i-1} \times J_{i} \quad \text{such that} \quad \alpha \, R_i \, \beta \iff f_i(U_\alpha) \cap U_\beta \neq \emptyset,   
\end{equation}
where $U_\alpha \in \mathcal{U}_{i-1}$ and $U_\beta \in \mathcal{U}_{i}$. This relation captures all possible connections between open sets in consecutive layers based on the neural network function $f_i$. As we have covers of different spaces, which are the latent representations, we need to extend the definition of the map of covers.

\begin{definition}[Map of Covers through a continuous function]
Given two covers $\mathcal{U}=\{U_i\}_{i\in I}$ of a space $\mathcal{X}$ and $\mathcal{V}=\{V_j\}_{j\in J}$ of a space $\mathcal{Y}$, respectively, and consider a continuous function $f:\mathcal{X}\rightarrow \mathcal{Y}$. Then, a map of covers (through $f$) from $\mathcal{U}$ to $\mathcal{V}$ is a set map $\xi:I\rightarrow J$ such that $f(U_i)\subseteq V_{\xi(i)}$ for every $i\in I$.
\end{definition}

\begin{remark}\label{rm:extension}
Propositions~\ref{prop:induced_simp}, \ref{prop:same_mapcov}, and \ref{prop:induce_mapcov} also hold for maps of covers through continuous functions.
\end{remark} 

Note that the relation in Equation~\ref{eq:relation} does not directly define a map of covers, since a single index $\alpha \in J_{i-1}$ may be related to multiple indices in $J_i$. Also, let us remark that cover towers are supposed to be a sequence of covers of the same space. However, we are interested in the evolution of the latent representations and then define cover towers for the specific case of MLPs. 

\begin{definition}[Cover tower of an MLP]
Given an MLP $F$ defined as the composition of $m$ hidden layers $\{\mathbb{R}^{n_{i-1}}\xrightarrow{f_i}\mathbb{R}^{n_i}\}_{i=1}^{m+1}$ with $n_{m+1}=1$. A cover tower of $F$ is a sequence of covers $\{\mathcal{U}_{i-1}\xrightarrow{\xi_i}\mathcal{U}_i\}_{i=1}^{m+1}$ where:
\begin{itemize}
    \item Each $\mathcal{U}_i = \{U_{\alpha}\}_{\alpha \in J_i}$ is a cover of the image space $X_i = F_i(X) \subset \mathbb{R}^{n_i}$ of the $i$-th layer
    \item Each $\xi_i: J_{i-1} \to J_i$ is a map of covers through $f_i$.
\end{itemize}
\end{definition}

In the following, we will call indistinctly cover towers and cover towers for MLPs as cover towers.

\begin{corollary}[Compatibility of composed cover maps]
Let $\{\mathcal{U}_{i-1}\xrightarrow{\xi_i}\mathcal{U}_i\}_{i=1}^{m+1}$ be a cover tower of an MLP $F$. For any $1 \leq i < j \leq m+1$, the composition $\xi_{i,j} = \xi_j \circ \xi_{j-1} \circ \cdots \circ \xi_{i+1} \circ \xi_i$ is a map of covers through the composed function $f_{i,j} = f_j \circ f_{j-1} \circ \cdots \circ f_{i+1} \circ f_i$.
\end{corollary}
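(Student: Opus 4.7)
The plan is to prove the statement by induction on $k = j - i$, using the defining property of a map of covers through a continuous function at each step and the elementary set-theoretic fact that $A \subseteq B$ implies $g(A) \subseteq g(B)$ for any function $g$.

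For the base case $k = 1$ (that is, $j = i+1$), the statement reduces exactly to the hypothesis that $\xi_{i+1}$ is a map of covers through $f_{i+1}$, so $\xi_{i,i+1} = \xi_{i+1} \circ \xi_i$ should instead be checked by applying the base definitions twice: for every $\alpha \in J_{i-1}$, $f_i(U_\alpha) \subseteq U_{\xi_i(\alpha)}$ holds because $\xi_i$ is a map of covers through $f_i$; applying $f_{i+1}$ to both sides preserves inclusion, and then $f_{i+1}(U_{\xi_i(\alpha)}) \subseteq U_{\xi_{i+1}(\xi_i(\alpha))}$ because $\xi_{i+1}$ is a map of covers through $f_{i+1}$. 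Concatenating these two inclusions yields $f_{i+1}(f_i(U_\alpha)) \subseteq U_{(\xi_{i+1}\circ\xi_i)(\alpha)}$, which is exactly the required property for $\xi_{i,i+1}$ through $f_{i,i+1}$.

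For the inductive step, I would assume that $\xi_{i,j-1} = \xi_{j-1} \circ \cdots \circ \xi_i$ is a map of covers through $f_{i,j-1}$, so that $f_{i,j-1}(U_\alpha) \subseteq U_{\xi_{i,j-1}(\alpha)}$ for every $\alpha \in J_{i-1}$. Applying $f_j$ to both sides and invoking the hypothesis that $\xi_j$ is a map of covers through $f_j$ gives $f_j(f_{i,j-1}(U_\alpha)) \subseteq f_j(U_{\xi_{i,j-1}(\alpha)}) \subseteq U_{\xi_j(\xi_{i,j-1}(\alpha))}$. Recognising the left-hand side as $f_{i,j}(U_\alpha)$ and the right-hand index as $\xi_{i,j}(\alpha)$ concludes the induction.

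There is essentially no mathematical obstacle here; the corollary is a routine bookkeeping consequence of transitivity of inclusion under forward images and the layer-wise compatibility condition built into the definition of a cover tower of an MLP. The only care needed is notational: one must keep track of which cover each $U$ lives in at each step, and verify that the indices line up correctly along the composition $J_{i-1} \to J_i \to \cdots \to J_j$. If one prefers a non-inductive presentation, the same argument can be written as a single telescoping chain of inclusions, but the inductive version makes the role of Remark~\ref{rm:extension} (extending the map-of-covers notion to continuous functions) most transparent.
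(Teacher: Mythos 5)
Your proof is correct and follows essentially the same route as the paper: induction on the length of the composition, using the fact that forward images preserve set inclusion together with the defining condition of a map of covers at each step. The only cosmetic difference is that you spell out the two-map case $j=i+1$ explicitly as the base case, whereas the paper treats the length-one composition $\xi_i$ as the trivial base and absorbs the two-map argument into the inductive step.
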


\begin{proof}
We proceed by induction on the length of the composition. The base case is trivial since $\xi_i$ is already a map of covers through $f_i$ by definition.

For the inductive step, assume that $\xi_{i,k} = \xi_k \circ \cdots \circ \xi_i$ is a map of covers through $f_{i,k} = f_k \circ \cdots \circ f_i$ for some $k$ where $i < k < j$. We need to show that $\xi_{i,k+1} = \xi_{k+1} \circ \xi_{i,k}$ is a map of covers through $f_{i,k+1} = f_{k+1} \circ f_{i,k}$.

Let $\alpha \in J_{i-1}$ and consider $U_\alpha \in \mathcal{U}_{i-1}$. By the induction hypothesis, we have:
\[f_{i,k}(U_\alpha) \subseteq U_{\xi_{i,k}(\alpha)}\]

Since $\xi_{k+1}$ is a map of covers through $f_{k+1}$, we have:
\[f_{k+1}(U_{\xi_{i,k}(\alpha)}) \subseteq U_{\xi_{k+1}(\xi_{i,k}(\alpha))} = U_{\xi_{i,k+1}(\alpha)}\]

Therefore:
\[f_{i,k+1}(U_\alpha) = f_{k+1}(f_{i,k}(U_\alpha)) \subseteq f_{k+1}(U_{\xi_{i,k}(\alpha)}) \subseteq U_{\xi_{i,k+1}(\alpha)}\]

This shows that $\xi_{i,k+1}$ is a map of covers through $f_{i,k+1}$, completing the induction step. We conclude that $\xi_{i,j}$ is a map of covers through $f_{i,j}$ for all $1 \leq i < j \leq m+1$.
\end{proof}

To guarantee the existence of the map of covers between the covers of the different latent spaces, we will induce the cover tower through a backpropagation process following the same construction used in Mapper. Given a cover $\U_i$ of the $i$-th latent space $X_i$, we can compute the connected components of the preimage of each open set in $\U_i$, which we will call the pullback $f^*_i$. Then, we define the pullback of the cover in the following way:
\begin{equation}\label{eq:pullback_cover}
 \U_{i-1}:= f_i^*(\U_i) = \{\text{Connected components of }f_i^{-1}(U_\alpha)\, \mid \, U_\alpha\in \U_i\}.   
\end{equation}

Remark that $\U_{i-1}$ is a cover of $X_{i-1}$. To define the map of covers $\xi_i:J_{i-1}\rightarrow J_i$, we need to map each open set in $\U_{i-1}$ with its associated open set in $\U_i$. This reduces the number of choices as we only need to fix the cover on the output layer $\U_{m+1}$. Proposition~\ref{prop:same_mapcov} ensures that any maps of covers defined between the same two covers induce contiguous simplicial maps. Consequently, these maps induce identical homomorphisms at the homology level (Fact~\ref{prop:contiguous_simphom}). Besides, we only consider this methodology assuming that the preimage has finitely many connected components.

\begin{corollary}
Given an MLP $F$, the sequence of covers $\{\mathcal{U}_{i-1}\xrightarrow{\xi_i}\mathcal{U}_i\}$ where each $\U_i$ is a pullback cover as defined in Equation~\ref{eq:pullback_cover} with the induced map of cover is a cover tower of $F$.    
\end{corollary}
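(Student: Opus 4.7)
The plan is to verify directly the two defining conditions of a cover tower of an MLP from the pullback construction in Equation~\ref{eq:pullback_cover}: namely, (a) each $\mathcal{U}_i$ is a cover of $X_i$, and (b) each $\xi_i: J_{i-1} \to J_i$ is a map of covers through $f_i$. Since the construction is top-down, the top cover $\mathcal{U}_{m+1}$ of $X_{m+1}$ is fixed, and I would build the rest of the tower by backward induction on $i$.

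For (a), assuming that $\mathcal{U}_i$ covers $X_i$, any point $x \in X_{i-1}$ satisfies $f_i(x) \in f_i(X_{i-1}) \subseteq X_i$, so there exists $\alpha \in J_i$ with $f_i(x) \in U_\alpha$. Hence $x \in f_i^{-1}(U_\alpha)$ and therefore lies in at least one connected component of this preimage, which by the construction of the pullback is an element of $\mathcal{U}_{i-1}$. Thus every point of $X_{i-1}$ is covered.

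For (b), I would first fix the natural indexing convention implicit in the Mapper construction: elements of $\mathcal{U}_{i-1}$ are indexed by pairs $(\alpha, C)$ where $\alpha \in J_i$ and $C$ is a connected component of $f_i^{-1}(U_\alpha)$. Define $\xi_i(\alpha, C) := \alpha$. Since $C \subseteq f_i^{-1}(U_\alpha)$, we have $f_i(C) \subseteq U_\alpha = U_{\xi_i(\alpha,C)}$, which is exactly the definition of a map of covers through $f_i$.

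The only real subtlety, which is more of a bookkeeping nuisance than a genuine obstacle, is the indexing of the pullback: if one were to identify different components purely as subsets, a single set could in principle be a connected component of more than one $f_i^{-1}(U_\alpha)$ at once, making $\xi_i$ ill-defined as a function from $J_{i-1}$. Fixing the $(\alpha, C)$ indexing removes this ambiguity and is consistent with the convention under which Propositions~\ref{prop:induced_simp}, \ref{prop:same_mapcov}, and \ref{prop:induce_mapcov} (as extended in Remark~\ref{rm:extension}) are invoked elsewhere in the paper. Once this convention is in place, both (a) and (b) are immediate, and applying them for each $i = 1, \ldots, m+1$ yields the claimed cover tower.
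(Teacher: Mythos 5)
Your proof is correct and matches the (implicit) argument the paper gives in the paragraph preceding the corollary, which itself offers no formal proof beyond asserting that the pullback is a cover and that the component-to-preimage assignment defines the required map of covers. The one thing you add beyond the paper's text — indexing components of $\U_{i-1}$ by pairs $(\alpha, C)$ so that $\xi_i$ is well-defined even when the same subset arises as a component of two different preimages — is a legitimate and worthwhile clarification that the paper leaves tacit.
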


Let us remark that given an MLP $F$, we have a cover tower:
$$\U_0\xrightarrow{f_1}\U_1\xrightarrow{f_2}\cdots \xrightarrow{f_m}\U_m\xrightarrow{f_{m+1}}\U_{m+1}$$
and it satisfies that $(f_i\circ f_{i+1}\circ \cdots \circ f_{m+1})^*=f_i^*\circ f_{i+1}^*\circ \cdots \circ f_{m+1}^*$ for all $i\in\{1,\dots,m+1\}$. 

\subsubsection{From sequence of covers to simplicial towers and persistent homology}\label{sec:pullback_cover_tower}

For each cover $\mathcal{U}_i$, we then compute its nerve complex $\mathcal{K}_i := N(\mathcal{U}_i)$. The map of covers $\xi_i: J_{i-1} \to J_i$ (through $f_i$) induces a simplicial map $\varphi_i: \mathcal{K}_{i-1} \to \mathcal{K}_i$ (Proposition~\ref{prop:induced_simp} and Remark~\ref{rm:extension}). Then, we can construct a simplicial tower:
$$
\mathcal{K}_0 \xrightarrow{\varphi_1} \mathcal{K}_1 \xrightarrow{\varphi_2} \cdots \xrightarrow{\varphi_m} \mathcal{K}_m \xrightarrow{\varphi_{m+1}} \mathcal{K}_{m+1}
$$
Applying the homology functor with coefficients in a field $\mathbb{F}$, we obtain a sequence of vector spaces and linear maps:

$$
H_p(\mathcal{K}_0) \xrightarrow{(\varphi_1)_*} H_p(\mathcal{K}_1) \xrightarrow{(\varphi_2)_*} \cdots \xrightarrow{(\varphi_m)_*} H_p(\mathcal{K}_m) \xrightarrow{(\varphi_{m+1})_*} H_p(\mathcal{K}_{m+1})
$$
This sequence defines the $p$-th persistent homology module of the simplicial tower, capturing how topological features evolve across the neural network layers. The resulting persistence diagrams reveal the evolution of the topology of the latent space through the MLP, showing the inner processing of the MLP towards a linearly separable output. The persistent homology of a simplicial tower can be computed using algorithms described in~\cite{Dey_Wang_2022} and in~\cite{Kerber2018}\footnote{See \url{https://bitbucket.org/schreiberh/sophia/src/master/} for their specific code to compute persistent homology of simplicial towers.}.

It is hard to control whether two covers are homeomorphic. However, we can explicitly relate the homology groups of two simplicial towers so that we can compute the existing differences when different covers are chosen.

We can iterate the result in Proposition~\ref{prop:induce_mapcov} to build the following diagrams.

\begin{corollary}[Cover towers related under a map of covers]
Given two cover towers $\mathcal{U} = \{\mathcal{U}_{i}\}_{i=0}^{m+1}$ and $\mathcal{V} = \{\mathcal{V}_{i}\}_{i=0}^{m+1}$ induced by pullback operations from the last covers $\mathcal{U}_{m+1}$ and $\mathcal{V}_{m+1}$ with a map of covers $\rho_{m+1}: \mathcal{U}_{m+1} \rightarrow \mathcal{V}_{m+1}$, there exist maps of covers $\rho_i: \mathcal{U}_i \rightarrow \mathcal{V}_i$ for each $i \in \{0,1,\ldots,m\}$ that form the following diagram:

\[
\begin{tikzcd}
\mathcal{U}_0 \arrow[r, "\xi_1"] \arrow[d, "\rho_0"'] & \mathcal{U}_{1} \arrow[r, "\xi_2"] \arrow[d, "\rho_1"'] &\cdots  \arrow[r, "\xi_m"] &\mathcal{U}_{m} \arrow[r, "\xi_{m+1}"] \arrow[d, "\rho_m"'] & \mathcal{U}_{m+1}\arrow[d, "\rho_{m+1}"']\\
\mathcal{V}_0 \arrow[r, "\eta_1"]  & \mathcal{V}_{1} \arrow[r, "\eta_2"]  &\cdots  \arrow[r, "\eta_m"] &\mathcal{V}_{m} \arrow[r, "\eta_{m+1}"] & \mathcal{V}_{m+1}
\end{tikzcd}
\]
Furthermore, it induces the following diagram:
\[
\begin{tikzcd}
K_0 \arrow[r, "\varphi_1"] \arrow[d, "\phi_0"'] & K_{1} \arrow[r, "\varphi_2"] \arrow[d, "\phi_1"'] &\cdots  \arrow[r, "\varphi_m"] &K_{m} \arrow[r, "\varphi_{m+1}"] \arrow[d, "\phi_m"'] & K_{m+1}\arrow[d, "\phi_{m+1}"']\\
L_0 \arrow[r, "\psi_1"]  & L_{1} \arrow[r, "\psi_2"]  &\cdots  \arrow[r, "\psi_m"] &L_{m} \arrow[r, "\psi_{m+1}"] & L_{m+1}
\end{tikzcd}
\]
where $K_i = N(\mathcal{U}_i)$, $L_i = N(\mathcal{V}_i)$, $\varphi_i = N(\xi_i)$, $\psi_i = N(\eta_i)$, and $\phi_i = N(\rho_i)$.
\end{corollary}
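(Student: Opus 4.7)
The plan is to construct the family $\{\rho_i\}_{i=0}^{m}$ by descending induction on $i$, starting from the given $\rho_{m+1}$ and working back towards the input layer. The base case $i=m+1$ is exactly the hypothesis. For the inductive step, assume that $\rho_{i+1}:\mathcal{U}_{i+1}\to \mathcal{V}_{i+1}$ is a map of covers of $X_{i+1}$. Since $\mathcal{U}_i$ and $\mathcal{V}_i$ are defined as the pullback covers $f_{i+1}^*(\mathcal{U}_{i+1})$ and $f_{i+1}^*(\mathcal{V}_{i+1})$, respectively, I would invoke Proposition~\ref{prop:induce_mapcov} together with Remark~\ref{rm:extension} to obtain an induced map of covers $\rho_i:\mathcal{U}_i\to\mathcal{V}_i$ through $f_{i+1}$. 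Concretely, for each connected component $C\subseteq f_{i+1}^{-1}(U_\alpha)$ indexing an element of $\mathcal{U}_i$, the induced $\rho_i(C)$ is the unique connected component of $f_{i+1}^{-1}(V_{\rho_{i+1}(\alpha)})$ containing $C$, which exists because $U_\alpha\subseteq V_{\rho_{i+1}(\alpha)}$ and hence $C\subseteq f_{i+1}^{-1}(V_{\rho_{i+1}(\alpha)})$.

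Next I would verify that each square in the cover-tower diagram commutes strictly. Tracing an index $C\in J_i$ (representing a connected component as above) through the two paths of the square gives, on one hand, $\eta_{i+1}(\rho_i(C))$, which by the pullback construction of $\eta_{i+1}$ is the parent index in $\mathcal{V}_{i+1}$, namely $\rho_{i+1}(\alpha)$; on the other hand, $\rho_{i+1}(\xi_{i+1}(C))=\rho_{i+1}(\alpha)$, since $\xi_{i+1}$ sends a connected component to its parent index by definition. The two paths agree, so the square commutes as maps of index sets, and the containment condition on open sets is preserved by composition. This yields the first diagram.

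For the second diagram I would apply the nerve functor. Proposition~\ref{prop:induced_simp} (extended by Remark~\ref{rm:extension}) turns each map of covers $\xi_i$, $\eta_i$, $\rho_i$ into a simplicial map $\varphi_i$, $\psi_i$, $\phi_i$ acting as the same map on vertices. Because the induced simplicial map is obtained by literally reusing the underlying index function, the strict commutativity established for the cover-level squares transfers verbatim to the nerve-level squares, yielding $\psi_i\circ\phi_{i-1}=\phi_i\circ\varphi_i$ on vertices, hence as simplicial maps.

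The main obstacle I anticipate is the bookkeeping in the inductive step: the pullback cover refines preimages into connected components, so there is a choice to make when asserting that $\rho_i(C)$ lands in a specific component of $f_{i+1}^{-1}(V_{\rho_{i+1}(\alpha)})$. Any two such valid assignments will yield maps of covers between the same pair $\mathcal{U}_i,\mathcal{V}_i$, and by Proposition~\ref{prop:same_mapcov} the induced simplicial maps are contiguous, hence identical on homology by Fact~\ref{prop:contiguous_simphom}; so the construction is well-posed up to the equivalence relevant for persistent homology. Once this point is settled, the rest of the argument is a clean induction plus functoriality of the nerve.
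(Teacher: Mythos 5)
Your proposal follows essentially the same route as the paper: descending induction from $\rho_{m+1}$, using Proposition~\ref{prop:induce_mapcov} (with Remark~\ref{rm:extension}) to produce $\rho_i$ as the map sending a connected component $C$ of $f_{i+1}^{-1}(U_\alpha)$ to the component of $f_{i+1}^{-1}(V_{\rho_{i+1}(\alpha)})$ containing it, then applying Proposition~\ref{prop:induced_simp} to pass to nerves. Your additional verification that the cover-level squares commute strictly (and the observation that any residual ambiguity is absorbed by contiguity via Proposition~\ref{prop:same_mapcov} and Fact~\ref{prop:contiguous_simphom}) is sound and in fact supplies detail the paper omits, deferring commutativity to the subsequent homology-level corollary.
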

\begin{proof}
We begin with the given map of covers $\rho_{m+1}: \mathcal{U}_{m+1} \rightarrow \mathcal{V}_{m+1}$ at the last layer. Our goal is to construct maps of covers $\rho_i: \mathcal{U}_i \rightarrow \mathcal{V}_i$ for all $i \in \{0,1,\ldots,m\}$. Recall that each cover $\mathcal{U}_i$ and $\mathcal{V}_i$ is constructed by a pullback operation from the subsequent layer. Specifically, for any $i \in \{0,1,\ldots,m\}$:
$$\mathcal{U}_i = \{\text{Connected components of } f_{i+1}^{-1}(U_\alpha) \mid U_\alpha \in \mathcal{U}_{i+1}\}$$
$$\mathcal{V}_i = \{\text{Connected components of } f_{i+1}^{-1}(V_\beta) \mid V_\beta \in \mathcal{V}_{i+1}\}$$
where $f_{i+1}$ is the function representing the $(i+1)$-th layer of the MLP.
By Proposition~\ref{prop:induce_mapcov}, for each $i \in \{0,1,\ldots,m\}$, the function $f_{i+1}: X_i \rightarrow X_{i+1}$ and the map of covers $\rho_{i+1}: \mathcal{U}_{i+1} \rightarrow \mathcal{V}_{i+1}$ induce a map of covers $\rho_i: \mathcal{U}_i \rightarrow \mathcal{V}_i$. Specifically, for each open set $U \in \mathcal{U}_i$, which is a connected component of $f_{i+1}^{-1}(U')$ for some $U' \in \mathcal{U}_{i+1}$, we define $\rho_i(U)$ to be the connected component of $f_{i+1}^{-1}(\rho_{i+1}(U'))$ that contains $U$, establishing the first diagram.

For the second, we apply the nerve $N$ to the first diagram. By Proposition~\ref{prop:induced_simp}, a map of covers induces a simplicial map between the corresponding nerve complexes. Therefore, for each $i \in \{0,1,\ldots,m+1\}$: (1) $\xi_i: \mathcal{U}_{i-1} \rightarrow \mathcal{U}_i$ induces $\varphi_i = N(\xi_i): K_{i-1} \rightarrow K_i$; (2) $\eta_i: \mathcal{V}_{i-1} \rightarrow \mathcal{V}_i$ induces $\psi_i = N(\eta_i): L_{i-1} \rightarrow L_i$; and, (3) $\rho_i: \mathcal{U}_i \rightarrow \mathcal{V}_i$ induces $\phi_i = N(\rho_i): K_i \rightarrow L_i$. This establishes the second diagram of simplicial maps between the corresponding nerve complexes.
\end{proof}

\begin{corollary}
The following diagram commutes:
\[
\begin{tikzcd}
H_p(K_0) \arrow[r, "\varphi_1"] \arrow[d, "(\phi_0)^*"'] & H_p(K_{1}) \arrow[r, "(\varphi_2)^*"] \arrow[d, "(\phi_1)^*"'] &\cdots  \arrow[r, "(\varphi_m)^*"] &H_p(K_{m}) \arrow[r, "\varphi_{m+1}"] \arrow[d, "(\phi_m)^*"'] & H_p(K_{m+1})\arrow[d, "(\phi_{m+1})^*"']\\
H_p(L_0) \arrow[r, "(\psi_1)^*"]  & H_p(L_{1}) \arrow[r, "(\psi_2)^*"]  &\cdots  \arrow[r, "(\psi_m)^*"] &H_p(L_{m}) \arrow[r, "(\psi_{m+1})"] & H_p(L_{m+1})
\end{tikzcd}
\]    
\end{corollary}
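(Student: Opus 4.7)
The plan is to show commutativity square-by-square: each square in the displayed diagram has two paths $H_p(K_{i-1}) \to H_p(L_i)$, namely $(\phi_i)_* \circ (\varphi_i)_*$ and $(\psi_i)_* \circ (\phi_{i-1})_*$, and I will show these agree by identifying them with the maps induced by two a priori different maps of covers from $\mathcal{U}_{i-1}$ to $\mathcal{V}_i$ through $f_i$, then invoking Propositions~\ref{prop:induced_simp}, \ref{prop:same_mapcov} (with Remark~\ref{rm:extension}) and Fact~\ref{prop:contiguous_simphom}.

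Concretely, I would first verify that both $\rho_i \circ \xi_i$ and $\eta_i \circ \rho_{i-1}$ are maps of covers from $\mathcal{U}_{i-1}$ to $\mathcal{V}_i$ through $f_i$. For the former, $\xi_i$ gives $f_i(U_\alpha) \subseteq U_{\xi_i(\alpha)}$ and $\rho_i$ (a map of covers of the same space $X_i$) gives $U_{\xi_i(\alpha)} \subseteq V_{\rho_i(\xi_i(\alpha))}$, so by transitivity $f_i(U_\alpha) \subseteq V_{(\rho_i \circ \xi_i)(\alpha)}$. For the latter, $\rho_{i-1}$ gives $U_\alpha \subseteq V_{\rho_{i-1}(\alpha)}$ and then $\eta_i$ (through $f_i$) gives $f_i(V_{\rho_{i-1}(\alpha)}) \subseteq V_{\eta_i(\rho_{i-1}(\alpha))}$, so $f_i(U_\alpha) \subseteq V_{(\eta_i \circ \rho_{i-1})(\alpha)}$. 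Hence both set maps are legitimate maps of covers through $f_i$ between the same pair of covers.

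Next, by the functoriality stated in Proposition~\ref{prop:induced_simp} (extended by Remark~\ref{rm:extension}), each of these two maps of covers induces a simplicial map $K_{i-1} \to L_i$, and by construction these induced maps coincide with $\phi_i \circ \varphi_i$ and $\psi_i \circ \phi_{i-1}$ respectively, since $N$ acts on vertices by the underlying set map and composition of set maps goes to composition of simplicial maps. Applying Proposition~\ref{prop:same_mapcov} (again via Remark~\ref{rm:extension}) to the two distinct maps of covers $\rho_i \circ \xi_i$ and $\eta_i \circ \rho_{i-1}$ yields that $\phi_i \circ \varphi_i$ and $\psi_i \circ \phi_{i-1}$ are contiguous simplicial maps.

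Finally, Fact~\ref{prop:contiguous_simphom} tells us that contiguous simplicial maps induce identical homomorphisms on $H_p$ in every dimension, so $(\phi_i)_* \circ (\varphi_i)_* = (\psi_i)_* \circ (\phi_{i-1})_*$, which is precisely commutativity of the $i$-th square. Doing this for $i = 1, \dots, m+1$ finishes the proof. The only potentially subtle step is the bookkeeping verification that the vertical map $\rho_i$ (between two covers of the \emph{same} space $X_i$) composes correctly on both sides with maps through $f_i$; once one notices that "map of covers" of a single space is the special case of the extended definition with $f = \mathrm{id}$, the composition statements are immediate, so I do not anticipate a genuine obstacle beyond this careful reading of the hypotheses.
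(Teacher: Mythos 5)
Your proof is correct and follows essentially the same route as the paper's: both arguments observe that the two composite paths around each square are maps of covers between the same pair of covers, invoke Proposition~\ref{prop:same_mapcov} (via Remark~\ref{rm:extension}) to get contiguity of the induced simplicial maps, and then apply Fact~\ref{prop:contiguous_simphom} to conclude equality at the homology level. Your write-up simply fills in the explicit inclusion checks showing that $\rho_i \circ \xi_i$ and $\eta_i \circ \rho_{i-1}$ are both maps of covers from $\mathcal{U}_{i-1}$ to $\mathcal{V}_i$ through $f_i$ — details the paper's one-line proof leaves to the reader.
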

\begin{proof}
The composition of map of covers is a map of covers and, hence, the two possible paths for each square are map of covers between the same covers and then the induced simplicial maps are contiguous (see Proposition~\ref{prop:same_mapcov}), defining exactly the same maps at homology level (see Fact~\ref{prop:contiguous_simphom}), so the homology tower diagram does commute.     
\end{proof}

We can extend the tower for more than two covers and reach the following stability result.

\begin{theorem}[Stability of Pullback Cover Towers]\label{th:stab_pullback}
Let $F$ be an MLP with layers $\{\mathbb{R}^{n_{i-1}}\xrightarrow{f_i}\mathbb{R}^{n_i}\}_{i=1}^{m+1}$ where $\mathbb{R}^{n_{m+1}}=\mathbb{R}$, and let $X_i = F_i(X)$ be the image of dataset $X$ at the $i$-th layer. Consider two cover towers $\mathcal{U}_{m+1}$ and $\mathcal{V}_{m+1}$ of $X_{m+1}$ that are $\eta$-interleaved with the same resolution. Let $\mathcal{U}_{i}$ and $\mathcal{V}_{i}$ be the pullback cover towers defined recursively for each $i \in \{0,1,\ldots,m\}$. Then:

\begin{enumerate}
    \item For each $i \in \{0,1,\ldots,m\}$, the pullback cover towers $\mathcal{U}_{i}$ and $\mathcal{V}_{i}$ are $\eta$-interleaved.

    \item If $N(\mathcal{U}_{i})$ and $N(\mathcal{V}_{i})$ denote the nerve complexes of these covers, then for each $i \in \{0,1,\ldots,m+1\}$, the simplicial towers $N(\mathcal{U}_{i})$ and $\N(\mathcal{V}_{i})$ are also $\eta$-interleaved.

    \item The bottleneck distance between the persistence diagrams of these simplicial towers is bounded by $\eta$ for any homology dimension $p$:
    $$d_b(\mathrm{Dgm}_p(N(\mathcal{U}_{i})), \mathrm{Dgm}_p(N(\mathcal{V}_{i}))) \leq \eta$$
\end{enumerate}
\end{theorem}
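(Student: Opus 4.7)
The overall plan is to chain together three results already established in the excerpt: Proposition~\ref{prop:pullback_interleaved} for pulling the interleaving back through each layer, Proposition~\ref{prop:nerve_interleaved} for passing from cover towers to nerve towers, and Theorem~\ref{th:simptower_stab} for converting an interleaving of simplicial towers into a bound on bottleneck distance. Since the pullback cover tower of an MLP is defined recursively one layer at a time (Equation~\ref{eq:pullback_cover}), the natural induction is backwards in $i$, starting from $\mathcal{U}_{m+1}$ and $\mathcal{V}_{m+1}$ and descending to $\mathcal{U}_0$, $\mathcal{V}_0$.

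For part (1), I would set up a finite downward induction on $i\in\{m,m-1,\ldots,0\}$. The base case $i=m+1$ is the hypothesis that $\mathcal{U}_{m+1}$ and $\mathcal{V}_{m+1}$ are $\eta$-interleaved with the same resolution. For the inductive step, assume $\mathcal{U}_{i+1}$ and $\mathcal{V}_{i+1}$ are $\eta$-interleaved cover towers of $X_{i+1}\subset \mathbb{R}^{n_{i+1}}$; since $f_{i+1}:X_i\to X_{i+1}$ is continuous (as the restriction of the affine--plus--activation layer map to $X_i$), Proposition~\ref{prop:pullback_interleaved} applied to $f_{i+1}$ gives that the pullback cover towers $f_{i+1}^{*}(\mathcal{U}_{i+1})=\mathcal{U}_i$ and $f_{i+1}^{*}(\mathcal{V}_{i+1})=\mathcal{V}_i$ are $\eta$-interleaved. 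Iterating down to $i=0$ establishes part (1) at every layer.

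Part (2) is then a direct application of Proposition~\ref{prop:nerve_interleaved}: at each layer $i$, since $\mathcal{U}_i$ and $\mathcal{V}_i$ are $\eta$-interleaved cover towers of $X_i$ with matching resolution, their nerve towers $N(\mathcal{U}_i)$ and $N(\mathcal{V}_i)$ are $\eta$-interleaved simplicial towers. For part (3), I would invoke Theorem~\ref{th:simptower_stab}: an $\eta$-interleaving of simplicial towers yields interleaving distance $d_I(N(\mathcal{U}_i),N(\mathcal{V}_i))\le \eta$, so the bottleneck bound $d_b(\mathrm{Dgm}_p(N(\mathcal{U}_i)),\mathrm{Dgm}_p(N(\mathcal{V}_i)))\le \eta$ follows immediately for each dimension $p$, provided the homology towers are tame (which holds since each cover is finite and hence each nerve complex is finite).

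The main obstacle I anticipate is bookkeeping rather than substance: verifying that the pullback operation preserves the resolution of the tower and that the interleaving maps at layer $i+1$ descend coherently through $f_{i+1}^{*}$ to interleaving maps at layer $i$ that still relate $\mathcal{U}_i$ to $\mathcal{V}_i$ in the required way. This is essentially the content of Proposition~\ref{prop:pullback_interleaved}, but one must be careful that the connected-components refinement used in our definition of the pullback cover (Equation~\ref{eq:pullback_cover}) does not disturb the interleaving, since the cover maps in Proposition~\ref{prop:induce_mapcov} send a preimage component into the component containing its image. A brief verification that the induced maps of covers between refined pullback covers still satisfy the $\eta$-interleaving inequalities will close this gap and complete the argument.
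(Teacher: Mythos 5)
Your proposal matches the paper's proof essentially step for step: backward induction on $i$ using Proposition~\ref{prop:pullback_interleaved} for part (1), then Proposition~\ref{prop:nerve_interleaved} for part (2), then Theorem~\ref{th:simptower_stab} for part (3). The caveat you raise at the end --- that the connected-components refinement of Equation~\ref{eq:pullback_cover} must not disturb the interleaving --- is a point the paper's own proof silently elides, so flagging it is a small improvement in rigour rather than a divergence in method.
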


\begin{proof}
We prove each claim in order:

\begin{enumerate}
    \item We proceed by backward induction on $i$, starting from $i = m+1$. By assumption, the cover towers $\mathcal{U}_{m+1}$ and $\mathcal{V}_{m+1}$ are $\eta$-interleaved. For the inductive step, assume that $\mathcal{U}_{i}$ and $\mathcal{V}_{i}$ are $\eta$-interleaved for some $i \in \{1,\ldots,m\}$. Then, by Proposition~\ref{prop:pullback_interleaved}, given the continuous function $f_i: X_{i-1} \rightarrow X_i$ and the two $\eta$-interleaved cover towers of $X_i$, their pullback cover towers via $f_i$ are also $\eta$-interleaved. Applying this result, we conclude that the pullback cover towers $\mathcal{U}_{i-1}$ and $\mathcal{V}_{i-1}$ are also $\eta$-interleaved. By induction, we establish that $\mathcal{U}_{i}$ and $\mathcal{V}_{i}$ are $\eta$-interleaved for all $i \in \{0,1,\ldots,m+1\}$.

    \item For each $i \in \{0,1,\ldots,m+1\}$, we have established that the cover towers $\{\mathcal{U}_{i}\}$ and $\{\mathcal{V}_{i}\}$ are $\eta$-interleaved. By Proposition~\ref{prop:nerve_interleaved}, if two cover towers with the same resolution are $\eta$-interleaved, then their nerve complexes are also $\eta$-interleaved. Therefore, the simplicial towers $N(\mathcal{U}_{i})$ and $N(\mathcal{V}_{i})$ are $\eta$-interleaved for each $i \in \{0,1,\ldots,m+1\}$.

    \item By Theorem~\ref{th:simptower_stab}, for tame $\eta$-interleaved simplicial towers $N(\mathcal{U}_{i})$ and $N(\mathcal{V}_{i})$ we have:
    $$d_b(\mathrm{Dgm}_p(N(\mathcal{U}_{i})), \mathrm{Dgm}_p(N(\mathcal{V}_{i}))) \leq \eta$$
    for any homology dimension $p$.
\end{enumerate}
\end{proof}

\subsubsection{Pullback of Čech filtrations}

In the following, we focus on when we are given a point cloud dataset. We can consider the Čech complex (or Vietoris-Rips complexes), which is the nerve of 
\begin{equation}
\U_{m+1}^\varepsilon:=\left\{B\left(x,\frac{\varepsilon}{2}\right)\,\mid \, x\in X_{m+1}\right\}    
\end{equation}
and define the pullback Čech-complex as the nerve of the following pullback cover (as in Equation~\ref{eq:pullback_cover}):
$$\U_{i-1}^\varepsilon:=\{\text{Connected components of }f^{-1}_i(U)\,\mid\, U\in \U_{i}^\varepsilon\} \text{ for } i\in \{1,\dots,m+1\}.$$

\begin{theorem}[Bi-persistence for MLPs]\label{th:bi_persistence}
Let $F$ be an MLP with layers $\{\mathbb{R}^{n_{i-1}}\xrightarrow{f_i}\mathbb{R}^{n_i}\}_{i=1}^{m+1}$ where $\mathbb{R}^{n_{m+1}}=\mathbb{R}$, and let $X \subset \mathbb{R}^{n_0}$ be a dataset with $X_i = F_i(X)$ its representation at the $i$-th layer. Given a sequence of increasing scale parameters $0 \leq \varepsilon_0 < \varepsilon_1 < \cdots < \varepsilon_k$, we can yield a diagram of VR-complexes:
\begin{equation}\label{eq:diagram_simp}
\begin{tikzcd}
K_0^{\varepsilon_0} \arrow[r,"\varphi_1^{\varepsilon_0}"] \arrow[d, hookrightarrow] & K_{1}^{\varepsilon_0} \arrow[r, "\varphi_2^{\varepsilon_0}"] \arrow[d, hookrightarrow] &\cdots  \arrow[r, "\varphi_m^{\varepsilon_0}"] &K_{m}^{\varepsilon_0} \arrow[r, "\varphi_{m+1}^{\varepsilon_0}"] \arrow[d, hookrightarrow] & K_{m+1}^{\varepsilon_0}\arrow[d, hookrightarrow]\\
K_0^{\varepsilon_1} \arrow[r, "\varphi_1^{\varepsilon_1}"] \arrow[d, hookrightarrow] & K_{1}^{\varepsilon_1} \arrow[r, "\varphi_2^{\varepsilon_1}"] \arrow[d, hookrightarrow] &\cdots  \arrow[r, "\varphi_m^{\varepsilon_1}"] &K_{m}^{\varepsilon_1} \arrow[r, "\varphi_{m+1}^{\varepsilon_1}"] \arrow[d, hookrightarrow] & K_{m+1}^{\varepsilon_1}\arrow[d, hookrightarrow]\\
\vdots \arrow[d, hookrightarrow]& \vdots \arrow[d, hookrightarrow]& &\vdots\arrow[d, hookrightarrow] & \vdots\arrow[d, hookrightarrow]\\
K_0^{\varepsilon_k} \arrow[r, "\varphi_1^{\varepsilon_k}"]  & K_{1}^{\varepsilon_k} \arrow[r, "\varphi_2^{\varepsilon_k}"]  &\cdots  \arrow[r, "\varphi_m^{\varepsilon_k}"] &K_{m}^{\varepsilon_k} \arrow[r, "\varphi_{m+1}^{\varepsilon_k}"] & K_{m+1}^{\varepsilon_k}
\end{tikzcd}
\end{equation}
where $K_i^{\varepsilon_j} = N(\mathcal{U}_i^{\varepsilon_j})$, $\varphi_i^{\varepsilon_j} = N(\xi_i^{\varepsilon_j})$, and $\phi_i^j = N(\iota_i^j)$.

Applying the homology functor $H_p$ with coefficients in a field $\mathbb{F}$ then yields a commutative diagram of vector spaces and linear maps that enables two complementary analyses:

\begin{enumerate}
    \item \textbf{Layer persistence} (vertical analysis): For each fixed layer $i$, the sequence 
    $$H_p(K_i^{\varepsilon_0}) \rightarrow H_p(K_i^{\varepsilon_1}) \rightarrow \cdots \rightarrow H_p(K_i^{\varepsilon_k})$$
    forms a persistence module whose persistence diagram $\text{Dgm}_p^i$ captures the topological features of the data representation at the $i$-th layer.

    \item \textbf{MLP persistence} (horizontal analysis): For each fixed scale $\varepsilon_j$, the sequence
    $$H_p(K_0^{\varepsilon_j}) \rightarrow H_p(K_1^{\varepsilon_j}) \rightarrow \cdots \rightarrow H_p(K_{m+1}^{\varepsilon_j})$$
    forms a persistence module whose persistence diagram $\text{Dgm}_p^{\varepsilon_j}$ captures how topological features evolve through the layers of the neural network at scale $\varepsilon_j$.
\end{enumerate}
\end{theorem}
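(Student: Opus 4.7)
The plan is to verify that the grid in Equation~\ref{eq:diagram_simp} is a genuine commutative diagram of simplicial complexes, and then to read off both persistence modules from its image under the homology functor $H_p(-;\mathbb{F})$. The rows are already in place: for each fixed scale $\varepsilon_j$, the pullback construction of Section~\ref{sec:pullback_cover_tower} produces a cover tower $\U_0^{\varepsilon_j}\to\cdots\to \U_{m+1}^{\varepsilon_j}$ whose nerve is the corresponding row of simplicial complexes connected by the simplicial maps $\varphi_i^{\varepsilon_j}=N(\xi_i^{\varepsilon_j})$. So the real work is to build the vertical arrows and check that every square commutes.

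For the vertical arrows, I start at the output layer. Since $\varepsilon_j<\varepsilon_{j+1}$ implies $B(x,\varepsilon_j/2)\subseteq B(x,\varepsilon_{j+1}/2)$, the identity on the index set $X_{m+1}$ is a map of covers $\iota_{m+1}^j:\U_{m+1}^{\varepsilon_j}\to\U_{m+1}^{\varepsilon_{j+1}}$. Iterating Proposition~\ref{prop:induce_mapcov} (extended to continuous functions by Remark~\ref{rm:extension}) down through $f_{m+1},f_m,\ldots,f_1$ pulls this back to maps of covers $\iota_i^j:\U_i^{\varepsilon_j}\to \U_i^{\varepsilon_{j+1}}$ at every intermediate layer, and Proposition~\ref{prop:induced_simp} then supplies the simplicial maps $\phi_i^j=N(\iota_i^j)$ drawn as vertical edges in the diagram.

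The central step is the commutativity of each square. Fix a square with corners $K_i^{\varepsilon_j}$, $K_{i+1}^{\varepsilon_j}$, $K_i^{\varepsilon_{j+1}}$, $K_{i+1}^{\varepsilon_{j+1}}$. Both the horizontal-then-vertical and the vertical-then-horizontal composites at the cover level are maps of covers from $\U_i^{\varepsilon_j}$ to $\U_{i+1}^{\varepsilon_{j+1}}$ through the continuous function $f_{i+1}$: one is the composition $\iota_{i+1}^j\circ\xi_{i+1}^{\varepsilon_j}$, the other is $\xi_{i+1}^{\varepsilon_{j+1}}\circ\iota_i^j$. They may well differ as set maps of index sets, but by Proposition~\ref{prop:same_mapcov} the induced simplicial maps are contiguous, and by Fact~\ref{prop:contiguous_simphom} they coincide at the level of $H_p$. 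Consequently, after applying $H_p(-;\mathbb{F})$ every square commutes strictly as a diagram of vector spaces.

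Once commutativity is established, the two persistence modules emerge immediately. Fixing a column produces the layer persistence module $H_p(K_i^{\varepsilon_0})\to\cdots\to H_p(K_i^{\varepsilon_k})$ whose diagram is $\mathrm{Dgm}_p^i$; fixing a row produces the MLP persistence module $H_p(K_0^{\varepsilon_j})\to\cdots\to H_p(K_{m+1}^{\varepsilon_j})$ whose diagram is $\mathrm{Dgm}_p^{\varepsilon_j}$. The composition axioms for both modules are inherited from the compositionality of maps of covers together with the functoriality of $N$ and $H_p$. The main obstacle is precisely the square-by-square commutativity check; once rephrased as the comparison of two maps of covers with identical source and target, it reduces cleanly to Proposition~\ref{prop:same_mapcov} and Fact~\ref{prop:contiguous_simphom}, so the remainder is essentially a diagram chase in the language of nerves and pullback covers.
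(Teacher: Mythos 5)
Your proof is correct and follows essentially the same route as the paper: establish the square-level commutativity up to contiguity via Proposition~\ref{prop:same_mapcov} and transfer it to strict commutativity in homology via Fact~\ref{prop:contiguous_simphom}, then read off the vertical and horizontal persistence modules. Your treatment is slightly more explicit than the paper's in one place — you construct the vertical maps of covers by taking the identity on $X_{m+1}$ (using $B(x,\varepsilon_j/2)\subseteq B(x,\varepsilon_{j+1}/2)$) and iterating Proposition~\ref{prop:induce_mapcov} down the layers, where the paper simply asserts ``natural inclusion maps''; this extra care is worthwhile, since at intermediate layers connected components of preimages may merge as $\varepsilon$ grows, so these vertical arrows need not be literal inclusions and must be handled, as you do, as general maps of covers.
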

\begin{proof}
We have maps of covers $\xi_i^{\varepsilon_j}: \mathcal{U}_{i-1}^{\varepsilon_j} \to \mathcal{U}_i^{\varepsilon_j}$ which then induce simplicial maps $\varphi_i^{\varepsilon_j}: K_{i-1}^{\varepsilon_j} \to K_i^{\varepsilon_j}$. For different scales $\varepsilon_j < \varepsilon_{j+1}$ at the same layer $i$, we have natural inclusion maps. Now, for any $i \in \{1,\ldots,m+1\}$ and $j \in \{0,\ldots,k-1\}$, we have the following diagrams:
\begin{equation}
\begin{tikzcd}
\U_{i-1}^{\varepsilon_j} \arrow[r, "\xi_i^{\varepsilon_j}"] \arrow[d,hookrightarrow] & \U_i^{\xi_j} \arrow[d, hookrightarrow] \\
\U_{i-1}^{\varepsilon_{j+1}} \arrow[r, "\xi_i^{\varepsilon_{j+1}}"] & \U_i^{\varepsilon_{j+1}}
\end{tikzcd}\hspace{10mm}
\begin{tikzcd}
K_{i-1}^{\varepsilon_j} \arrow[r, "\varphi_i^{\varepsilon_j}"] \arrow[d, hookrightarrow] & K_i^{\varepsilon_j} \arrow[d, hookrightarrow] \\
K_{i-1}^{\varepsilon_{j+1}} \arrow[r, "\varphi_i^{\varepsilon_{j+1}}"] & K_i^{\varepsilon_{j+1}}
\end{tikzcd}
\end{equation}

Then, the commutativity is formally guaranteed by Proposition \ref{prop:same_mapcov}, which ensures that any two maps of covers between the same covers induce contiguous simplicial maps, and thus the diagram commutes up to contiguity, considering the map of covers obtained by composing the map of covers on each path. Since homology is invariant under contiguous maps (Fact \ref{prop:contiguous_simphom}), the diagram commutes after applying the homology functor. Applying the homology functor $H_p$ with coefficients in a field $\mathbb{F}$ to the diagram in Equation \ref{eq:diagram_simp}, we obtain a commutative diagram of vector spaces and linear maps:

\begin{equation}\label{eq:comm_hom}
\begin{tikzcd}
H_p(K_0^{\varepsilon_0}) \arrow[r] \arrow[d] & H_p(K_1^{\varepsilon_0}) \arrow[r] \arrow[d] & \cdots \arrow[r] & H_p(K_{m+1}^{\varepsilon_0}) \arrow[d] \\
H_p(K_0^{\varepsilon_1}) \arrow[r] \arrow[d] & H_p(K_1^{\varepsilon_1}) \arrow[r] \arrow[d] & \cdots \arrow[r] & H_p(K_{m+1}^{\varepsilon_1}) \arrow[d] \\
\vdots \arrow[d] & \vdots \arrow[d] & & \vdots \arrow[d] \\
H_p(K_0^{\varepsilon_k}) \arrow[r] & H_p(K_1^{\varepsilon_k}) \arrow[r] & \cdots \arrow[r] & H_p(K_{m+1}^{\varepsilon_k})
\end{tikzcd}
\end{equation}
\end{proof}
 
Layer persistence stability follows from Theorem~\ref{th:persistence_stab}, and with enough granularity and tame layer functions, we can expect the resultant persistence diagrams to be similar to the ones we could obtain by applying standard persistent homology directly on each $X_i$. However, MLP persistence involves the pullback of the open balls of the last layer, which is a difficult or impossible task in practice, as layer functions generally have no inverse functions. The general approach to approximate the pullback is applying clustering algorithms to approximate the connectivity in the pullback space, as in most Mapper implementations (for example \texttt{kmapper}\cite{KeplerMapper_JOSS}). 

\subsubsection{Combinatorial approach for MLP persistence}\label{sec:combinatorial}

In this section, we present a practical computational approach for analysing the persistent homology of MLPs in binary classification tasks. This approach avoids the need to explicitly compute pullback covers and build the Čech complexes directly on the latent representations.

\begin{definition}[Layer-wise Čech Complex]
Let $F$ be an MLP with $m$ hidden layers, $X$ a dataset, and $X_i = F_i(X)$ the image of $X$ at the $i$-th layer. Given:
\begin{itemize}
    \item A cover $\mathcal{U}_{m+1}$ of the output space $X_{m+1}$,
    \item A sequence of scale parameters $\varepsilon = \{\varepsilon_0, \varepsilon_1, \ldots, \varepsilon_m\}$,
\end{itemize}
we define the layer-wise Čech Complex $K_i^{\varepsilon_i}$ at layer $i$ as the clique complex of the subgraph of the $\varepsilon_i$-proximity graph of $X_i$ satisfying that $(u,v)$ is an edge if:
\begin{itemize}
    \item $(u,v)$ is an edge in the $1$-skeleton of $K_{i+1}^{\varepsilon_{i+1}}$, and
    \item there exists $U\in \U_{m+1}$ such that $f_{i+1} \circ \cdots \circ f_{m+1}(u)\in U$ and $f_{i+1} \circ \cdots \circ f_{m+1}(v)\in U$.
\end{itemize} 
\end{definition}

\begin{corollary}[Simplicial Tower from Čech Complexes]
The layer-wise Čech Complexes form a simplicial tower:
\begin{equation}
K_0^{\varepsilon_0} \xrightarrow{\varphi_1} K_1^{\varepsilon_1} \xrightarrow{\varphi_2} \cdots \xrightarrow{\varphi_m} K_m^{\varepsilon_m}
\end{equation}
where each $\varphi_i$ is the simplicial map induced by the layer function $f_i: X_{i-1} \rightarrow X_i$. Besides, the $n$-skeleton of $K_{i-1}^{\varepsilon_{i-1}}$ is contained in the $n$-skeleton of $K_{i}^{\varepsilon_{i}}$ for $n=\min(n_{i-1},n_i)$.
\end{corollary}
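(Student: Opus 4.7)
The plan is to construct the simplicial map $\varphi_i : K_{i-1}^{\varepsilon_{i-1}} \to K_i^{\varepsilon_i}$ induced by each layer function $f_i$, verify it is a well-defined simplicial map using the clique complex structure, and then address the skeleton-containment assertion via dimension counting. First I would set $\varphi_i(x) := f_i(x)$ on vertices, which is well-defined because $f_i(X_{i-1}) = X_i$ by the definition of $X_i$. To check that this vertex map extends to a simplicial map, I would exploit the fact that both $K_{i-1}^{\varepsilon_{i-1}}$ and $K_i^{\varepsilon_i}$ are clique complexes: a set of vertices spans a simplex if and only if every pair among them is an edge, so it is enough to verify the edge condition. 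Take $(u,v)$ an edge of $K_{i-1}^{\varepsilon_{i-1}}$; the very definition of the layer-wise Čech complex demands, among other conditions, that $(f_i(u), f_i(v))$ be an edge in the $1$-skeleton of $K_i^{\varepsilon_i}$ (or a vertex, if $f_i$ happens to identify the two points, which is still legitimate for a simplicial map). A clique in the $1$-skeleton of $K_{i-1}^{\varepsilon_{i-1}}$ is therefore carried vertex-by-vertex to a clique in the $1$-skeleton of $K_i^{\varepsilon_i}$, which is a simplex by the clique complex property. Composing the $\varphi_i$ then yields the claimed simplicial tower.

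For the skeleton containment, I would read the claim as stating that, under the identification provided by $\varphi_i$, any $n$-simplex of $K_{i-1}^{\varepsilon_{i-1}}$ with $n \le \min(n_{i-1}, n_i)$ maps to a non-degenerate $n$-simplex of $K_i^{\varepsilon_i}$, so that $\varphi_i$ restricts to an injective simplicial map on the $n$-skeleton. A non-degenerate $n$-simplex has $n+1$ vertices, and for their images to remain distinct it suffices that the affine component $W_i(\cdot) + b_i$ of $f_i$ does not collapse them. Since $W_i \in \mathbb{R}^{n_i \times n_{i-1}}$ has generic rank $\min(n_{i-1}, n_i)$, for $n \le \min(n_{i-1}, n_i)$ any collection of $n+1$ affinely independent points in $\mathbb{R}^{n_{i-1}}$ remains distinct after the affine part, yielding a simplex of the same dimension in $K_i^{\varepsilon_i}$.

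The hardest part is this second claim, precisely because activations used in practice (most notably ReLU) can collapse distinct vertices and thereby violate strict dimension preservation. This can be handled either by restricting to injective activations such as sigmoid or tanh, by imposing a genericity hypothesis on the weights and data, or by weakening the interpretation of \emph{contained in} to mean inclusion through the induced simplicial map, which is automatic. The simplicial-tower part, by contrast, is essentially a bookkeeping exercise that falls out directly from the edge condition hard-wired into the definition of the layer-wise Čech complex.
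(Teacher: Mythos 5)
The paper does not actually supply a proof of this corollary, so there is no ``paper's own proof'' to compare against; I am therefore evaluating your argument on its own. Your treatment of the tower part is sound and, I believe, the intended one: the clique-complex structure of the layer-wise complexes reduces simpliciality of $\varphi_i$ to the edge condition, which the definition builds in directly, so edges go to edges and cliques go to cliques.

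The skeleton part of your argument has two problems. First, the rank argument does not work as written: a rank-$\min(n_{i-1},n_i)$ affine map can perfectly well collapse two of $n+1$ affinely independent points. The kernel of $W_i$ has dimension $n_{i-1}-\mathrm{rank}(W_i)$, and nothing in affine independence prevents a difference $u-v$ from lying in that kernel; injectivity of $W_i(\cdot)+b_i$ restricted to $X_{i-1}$ is a genericity assumption, not a consequence of the rank count. You flag this yourself in the last paragraph, but the weaker statement you actually need should be front and centre, not a fallback. Second, and more importantly, you overlook a simpler reading that makes the claim nearly tautological. The definition compares ``$(u,v)$'' against ``the $1$-skeleton of $K_{i+1}^{\varepsilon_{i+1}}$,'' which only typechecks if the vertices of all the $K_i^{\varepsilon_i}$ are identified with the same underlying data points of $X$ tracked through the network. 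Under that identification, condition (b) in the definition literally says that the edge set of $K_{i-1}^{\varepsilon_{i-1}}$ is a subset of the edge set of $K_i^{\varepsilon_i}$; since both are clique complexes, $K_{i-1}^{\varepsilon_{i-1}}\subseteq K_i^{\varepsilon_i}$ as subcomplexes of the full simplex on $X$, and every skeleton nests automatically, with no injectivity hypothesis at all. The role of $n=\min(n_{i-1},n_i)$ is then not to guarantee the nesting but to bound the dimension worth retaining: a proximity complex on a point cloud living in $\mathbb{R}^d$ carries no homological information above degree $d-1$, so truncating at $\min(n_{i-1},n_i)$ loses nothing for persistence and lets one slice the tower into genuine inclusions, as the sentence after the corollary indicates. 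Reorganising your proof around that observation would both repair the gap and match the apparent intent of the statement.
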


The last assertion makes it possible to compute MLP using standard persistent homology algorithms by slicing the sequence into inclusions up to a dimension. This construction allows us to track the topological evolution of data representations through the network while maintaining the relationships defined by the output layer classification. Intuitively, it can be considered an approximation of a path (not necessarily straight) from left to right in the commutative diagram of Equation~\ref{eq:comm_hom}.

\subsection{Relation to (multiscale) Mapper}\label{sec:rel_mapper}

The construction in Section~\ref{sec:pullback_cover_tower} can be understood as a generalisation of the Multiscale Mapper framework introduced by Dey et al.\cite{10.5555/2884435.2884506} and further developed in\cite{Dey_Wang_2022}. While standard Mapper applies a single filter function to analyse data, our approach applies this principle recursively through the sequential layers of a neural network.
Specifically, our methodology in Section~\ref{sec:pullback_cover_tower} can be viewed as applying multiple instances of Multiscale Mapper, where for each $i \in \{0,\dots,m\}$, we study the composite function $f_{m+1}\circ\cdots\circ f_i:X_i\rightarrow X_{m+1}$ through its pullback covers. This recursive application allows us to analyse the topological transformations induced by the entire network architecture rather than just a single filter function.
Theorem~\ref{th:stab_pullback} (Stability of Pullback Cover Towers) extends the stability results for cover perturbations developed in~\cite[Section 9.4.1]{Dey_Wang_2022}. While the original results establish stability for a single Multiscale Mapper construction, our extension handles the case of sequential pullbacks through multiple functions, reflecting the layered architecture of neural networks.

In~\cite[Section 9.4.3]{Dey_Wang_2022}, the authors introduce the intrinsic Čech filtration defined via the pullback metric. This construction measures proximity in the domain space of a continuous function, yielding a filtration that interleaves with the Multiscale Mapper. For $(c,s)$-good cover towers, they establish that Multiscale Mapper and the intrinsic Čech filtration are $2c$-interleaved, providing a computational alternative to explicitly constructing pullback covers.
Our combinatorial approach in Section~\ref{sec:combinatorial} follows a similar spirit, utilising proximity relationships in the input and latent spaces to approximate the pullback covers. However, unlike the standard intrinsic Čech filtration, our approach use a sequence of $\varepsilon$ values to guarantee connectivity in the pullback as well as inclusion up to a certain dimension in the complexes of subsequent layers, which makes MLP persistence easier to compute using standard persistent homology algorithms. Also, remark that the intrinsic Čech filtration could be applied for layer persistence but not to MLP persistence.

\subsection{Visualization of the trajectories}\label{sec:methodology_trajectories}

Given a cover tower of an MLP, it is interesting to visualise how data flows through the covers related by a map of covers. 

\begin{definition}[Trajectories of a cover tower]
Given a dataset $X \subset \mathbb{R}^{n_0}$, an MLP $F$ and a cover tower $\{\U_{i-1}\rightarrow \U_{i}\}_{i=0}^{m+1} $ of $F$. The trajectory of a point $x \in X$ is defined as
$$T(x) = (j_0, j_1, ..., j_{m+1})$$
where each $j_i \in J_i$ is the index $j_i=\xi_i(j_{i-1})$ for $i=m+1,\dots 1$ such that $U_{j_0} \in \U_0$ with $x \in U_{j_0}$.    
\end{definition}

\begin{proposition}\label{prop:graphintersection}
Let $x, y \in X$ be two points with trajectories induced by a sequence of cover maps $T(x) = (j_0^x, j_1^x, ..., j_{m+1}^x)$ and $T(y) = (j_0^y, j_1^y, ..., j_{m+1}^y)$ from a cover tower $\{\U_{i-1}\rightarrow \U_{i}\}_{i=0}^{m+1} $. If $j_i^x = j_i^y$ for some layer $i$, then $j_k^x = j_k^y$ for all $k > i$.
\end{proposition}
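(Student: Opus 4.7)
The plan is to prove the statement by a straightforward forward induction on the layer index, exploiting the fact that the trajectory of a point is, from layer $i$ onward, completely determined by $j_i$ through repeated application of the cover maps $\xi_k$.

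First, I would unpack the definition of a trajectory. By construction, once we have fixed $j_0^x \in J_0$ with $x \in U_{j_0^x}$, every subsequent coordinate is forced by the recursion $j_k^x = \xi_k(j_{k-1}^x)$, and similarly for $y$. The key structural point is that each $\xi_k \colon J_{k-1} \to J_k$ is a \emph{function} (a set map), not a relation, so its value on a given input is uniquely determined.

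Next, I would set up the induction. The base case is the hypothesis itself: $j_i^x = j_i^y$. For the inductive step, assume $j_k^x = j_k^y$ for some $k \geq i$. Then, by the recursive definition of the trajectory,
\[
j_{k+1}^x = \xi_{k+1}(j_k^x) = \xi_{k+1}(j_k^y) = j_{k+1}^y,
\]
where the middle equality uses the inductive hypothesis together with the fact that $\xi_{k+1}$ is a well-defined function on $J_k$. This closes the induction and yields $j_k^x = j_k^y$ for all $k > i$.

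There is essentially no obstacle here; the statement is a direct consequence of functoriality of the cover tower and the deterministic nature of the recursion defining a trajectory. The only subtlety worth flagging in the write-up is that the same argument would fail if one replaced the cover maps $\xi_k$ by the underlying relations $R_k$ from Equation~\ref{eq:relation}, since those are genuinely multi-valued. The whole content of the proposition is that, once one commits to a cover tower (and hence to a specific choice of $\xi_k$), trajectories can only merge and never split as we move forward through the network.
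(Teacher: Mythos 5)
Your proof is correct and matches the paper's argument: both rely on the observation that each $\xi_k$ is a single-valued function, so the trajectory from layer $i$ onward is uniquely determined by $j_i$, and a forward induction closes the argument. Your write-up is more explicit than the paper's terse justification, and your remark contrasting the functions $\xi_k$ with the relations $R_k$ is a useful clarification, but there is no substantive difference in approach.
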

\begin{proof}
This follows directly from the construction of the map of covers. By the definition, all points in $U_j$ map to the same open set in the next layer, determined by the map of covers $\xi_i$. By induction, the same applies to all subsequent layers.
\end{proof}

The analysis of trajectories offers a powerful lens for understanding how MLPs organise and separate data points across layers. By analysing trajectories across the entire network, we can construct a global view of how the MLP organises data.

\begin{definition}[Trajectory Graph]
Given a cover tower $\{\U_{i-1}\xrightarrow{\xi_i} \U_i\}_{i=0}^{m+1}$ with indices sets $J_i$.The trajectory graph $G_T = (V, E)$ is a directed graph where the set of vertices are $\bigcup_{i=0}^{m+1}J_i$ and the edges are given by $\xi_i$.
\end{definition}

The trajectory graph provides a comprehensive view of how data flows through the network. By colouring vertices according to the predominant class of points they contain, we can visualise how the network progressively organises data by class across layers.
\section{Experiments}\label{sec:experiments}

In this section, we validate our methodology on two distinct datasets: an illustrative 2-dimensional toy example and a high-dimensional real-world dataset of measurements of fetal heart rate and uterine contraction features on cardiotocograms classified by expert obstetricians~\cite{cardiotocography_193}. In practice, we use Vietoris-Rips complexes instead of Čech complexes by checking pairwise distances and adding higher-dimensional simplices when possible. This simplification does not alter the theoretical results while providing computational advantages.

All MLPs were trained using the Adam optimiser with default parameters. Training proceeded for 1000 epochs for the synthetic example and 5000 epochs for the cardiotocography dataset.

\subsection{Illustrative example}
\label{subsec:example}

We demonstrate our approach using a canonical non-linearly separable classification problem consisting of two concentric circles in $\mathbb{R}^2$ (Figure~\ref{fig:input})\footnote{This example is inspired by Colah's blog: \\\url{https://colah.github.io/posts/2014-03-NN-Manifolds-Topology/}}. This dataset presents a clear topological challenge: the two classes form concentric 1-cycles with a natural embedding in 2D space, but cannot be separated by any linear decision boundary. 

We employed an MLP $F:\mathbb{R}^2\rightarrow \mathbb{R}^3\rightarrow \mathbb{R}$ with sigmoid activation functions on all layers. This architecture, while simple, requires the network to learn a non-trivial transformation from the input space to a linearly separable representation. The hidden layer provides just enough capacity to solve the problem, with an intermediate 3D representation that enables linear separability. Once $F$ is trained to 100\% accuracy, we compute the latent representations $X_1$ and $X_2$, which correspond to the outputs of the hidden and final layers, respectively (see Figures~\ref{fig:dataset_classified} and \ref{fig:latent_classified}). 

\begin{figure}[h]
    \centering
    \begin{subfigure}[b]{0.3\linewidth}
        \includegraphics[width=\linewidth]{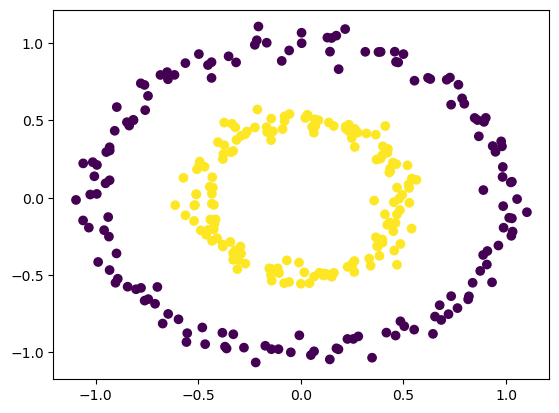}
        \caption{Dataset for binary classification composed of two concentric cycles.}
        \label{fig:input}
    \end{subfigure}
    \hfill
    \begin{subfigure}[b]{0.3\linewidth}
        \includegraphics[width=\linewidth]{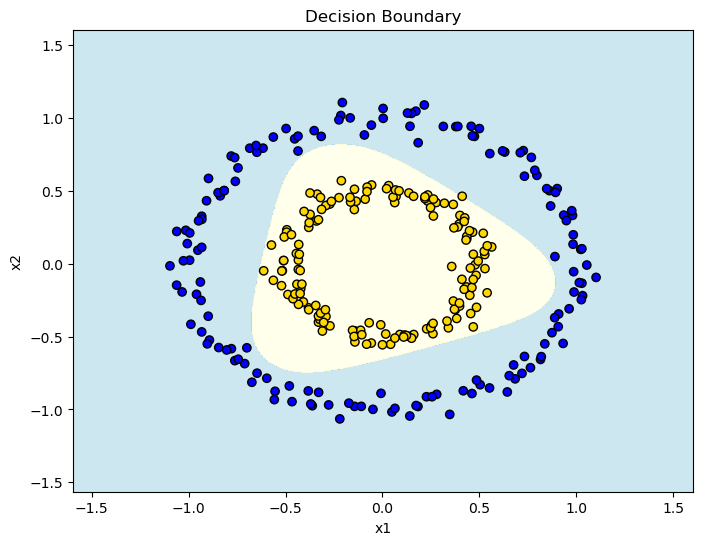}
        \caption{$X_0$ input dataset with decision boundary of the MLP $F$.}
        \label{fig:dataset_classified}
    \end{subfigure}
    \hfill
    \begin{subfigure}[b]{0.3\linewidth}
        \includegraphics[width=\linewidth]{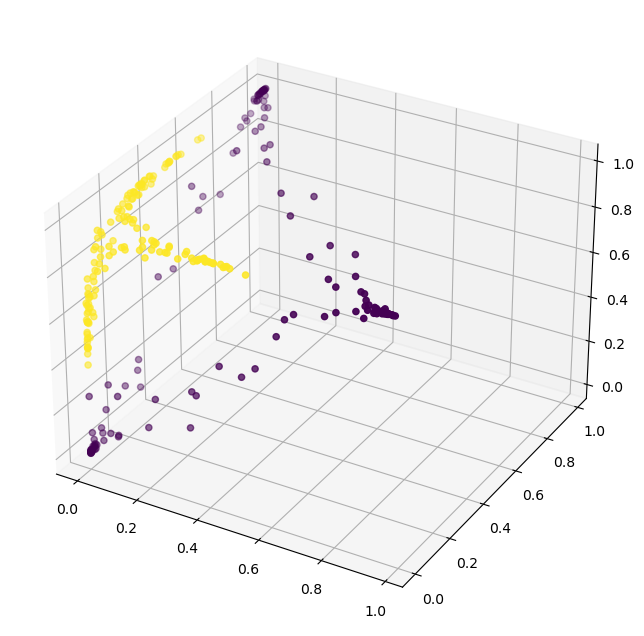}
        \caption{$X_1$ latent representation colored based on the MLP $F$ output.}
        \label{fig:latent_classified}
    \end{subfigure}
    \caption{(Section~\ref{subsec:example}) Classification problem and latent representation}
    \label{fig:Ex_setup}
\end{figure}

We now analyse how the classification is performed and extract information about the internal operations. The first step involves computing the layer persistence diagrams for $X_0$, $X_1$ and $X_2$ (see Figure~\ref{fig:ex1_layer_persistence}). The layer persistence diagrams provide insight into the topological evolution of the dataset through $F$. In Figure~\ref{fig:ex1_PH_L0}, we observe that for $X_0$ there are two prominent persistent 1-cycles alive during intervals $[0.12,0.72]$ and $[0.17,0.45]$, which correspond to the two concentric cycles. In Figure~\ref{fig:ex1_PH_L1}, we also observe two prominent 1-cycles for $X_1$ but with shorter lifespans, specifically during intervals $[0.18,0.43]$ and $[0.2,0.42]$. Finally, for $X_2$ we can only study connected components, but we observe that at scale $0.2$, there are two connected components that eventually merge at scale $0.21$. Consequently, we choose as the cover $\mathcal{U}_2$ for $X_2$ those two connected components.

\begin{figure}[h]
    \centering
    \begin{subfigure}[b]{0.3\linewidth}
        \includegraphics[width=\linewidth]{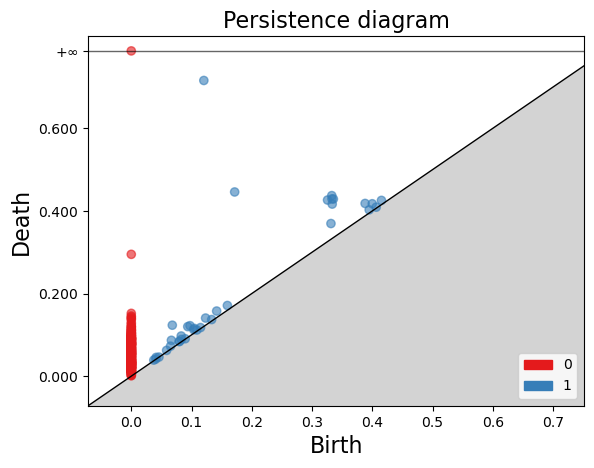}
        \caption{Layer persistence of $X_0$.}
        \label{fig:ex1_PH_L0}
    \end{subfigure}
    \hfill
    \begin{subfigure}[b]{0.3\linewidth}
        \includegraphics[width=\linewidth]{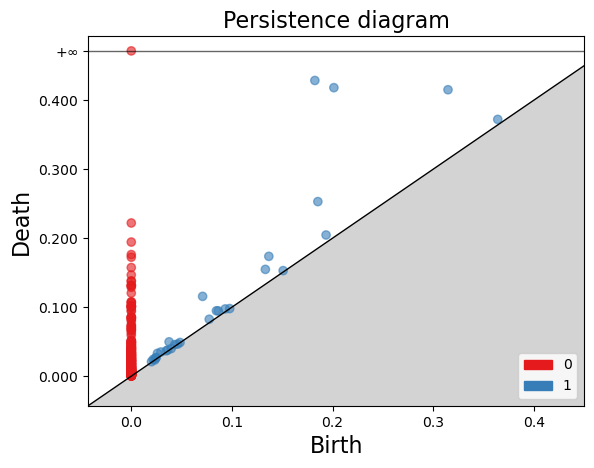}
        \caption{Layer persistence of $X_1$.}
        \label{fig:ex1_PH_L1}
    \end{subfigure}
    \hfill
    \begin{subfigure}[b]{0.3\linewidth}
        \includegraphics[width=\linewidth]{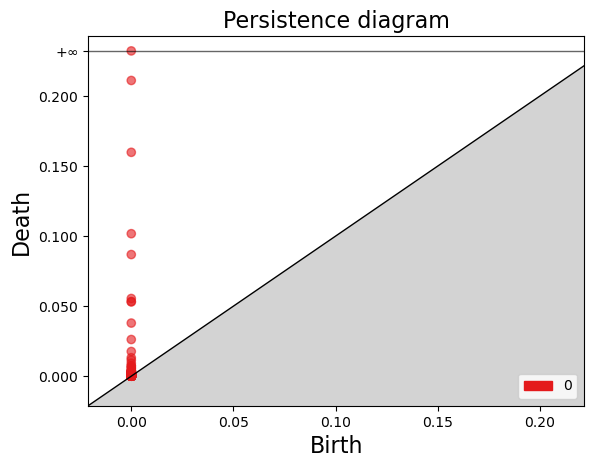}
        \caption{Layer persistence of $X_2$.}
        \label{fig:ex1_PH_L2}
    \end{subfigure}
    \caption{(Section~\ref{subsec:example}) Layer persistence for the two concentric cycles classification problem.}
    \label{fig:ex1_layer_persistence}
\end{figure}

Next, we compute MLP persistence (following Section~\ref{sec:combinatorial}) to track the relationships between homology components throughout the MLP. By examining these diagrams, we identify suitable scale parameters for the MLP persistence analysis. Specifically, we choose $\varepsilon = \{0.5, 0.4, 0.2\}$ for the input, hidden, and output layers, respectively. These values are selected by considering the scales at which our homology components of interest remain active. We then construct the layer-wise Vietoris-Rips complexes for each layer (see Figure~\ref{fig:ex1_MLP_VR}) and compute MLP persistence for these parameters. Taking into account the dimensions of $X_0$, $X_1$ and $X_2$, we obtain the following persistence sequences:
\begin{equation*}
\begin{split}
&H_0(K_0^{\varepsilon_0})\rightarrow H_0(K_1^{\varepsilon_1})\rightarrow H_0(K_2^{\varepsilon_2})\\
&H_1(K_0^{\varepsilon_0})\rightarrow H_1(K_1^{\varepsilon_1})\rightarrow 0
\end{split}    
\end{equation*}

\begin{figure}[h]
    \centering
    \begin{subfigure}[b]{0.35\linewidth}
        \includegraphics[width=\linewidth]{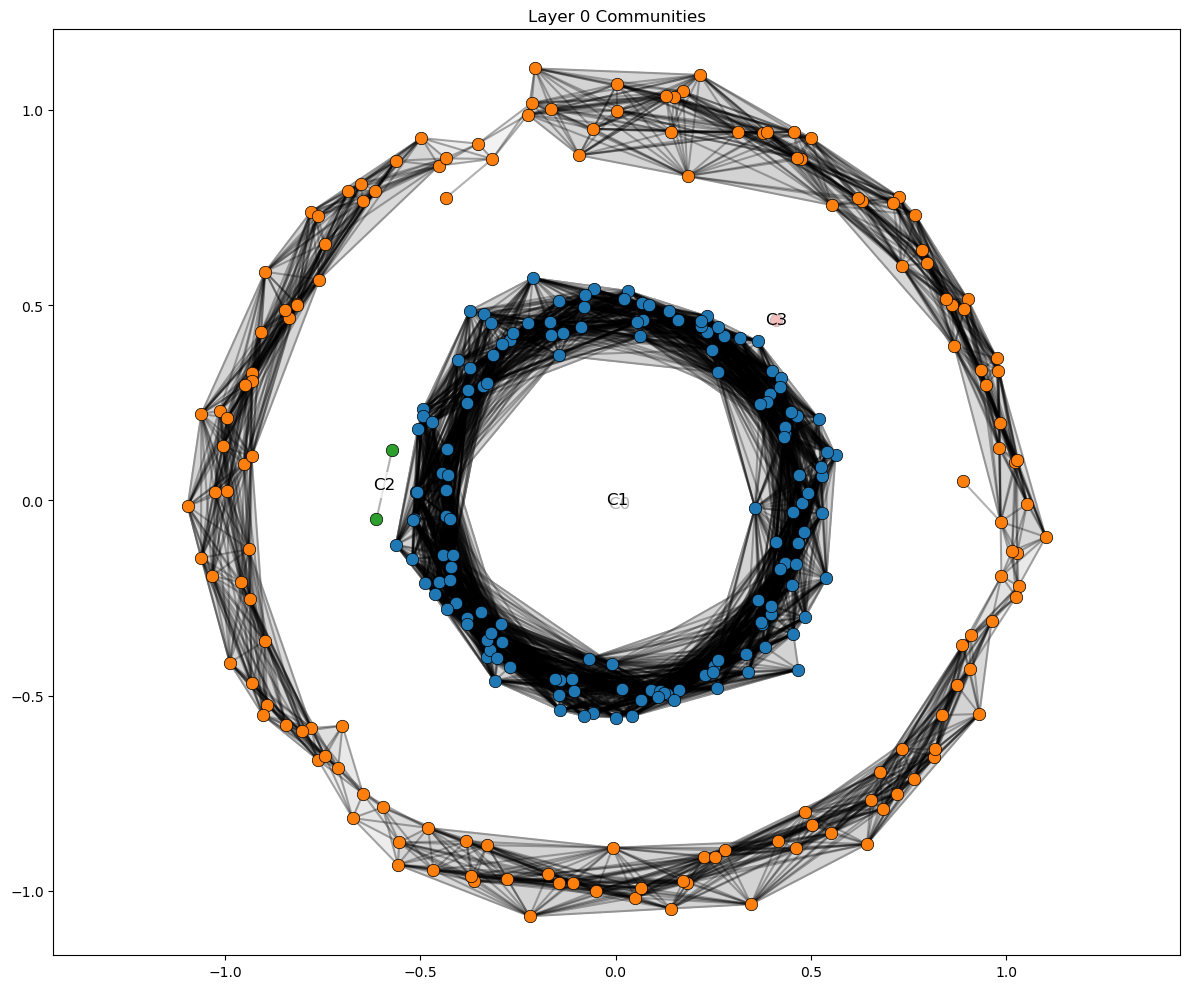}
        \caption{Layer-wise VR 1-skeleton for $X_0$ with $\varepsilon_0=0.5$.}
        \label{fig:ex1_VR0}
    \end{subfigure}
    \hspace{10mm}
    \begin{subfigure}[b]{0.35\linewidth}
        \includegraphics[width=\linewidth]{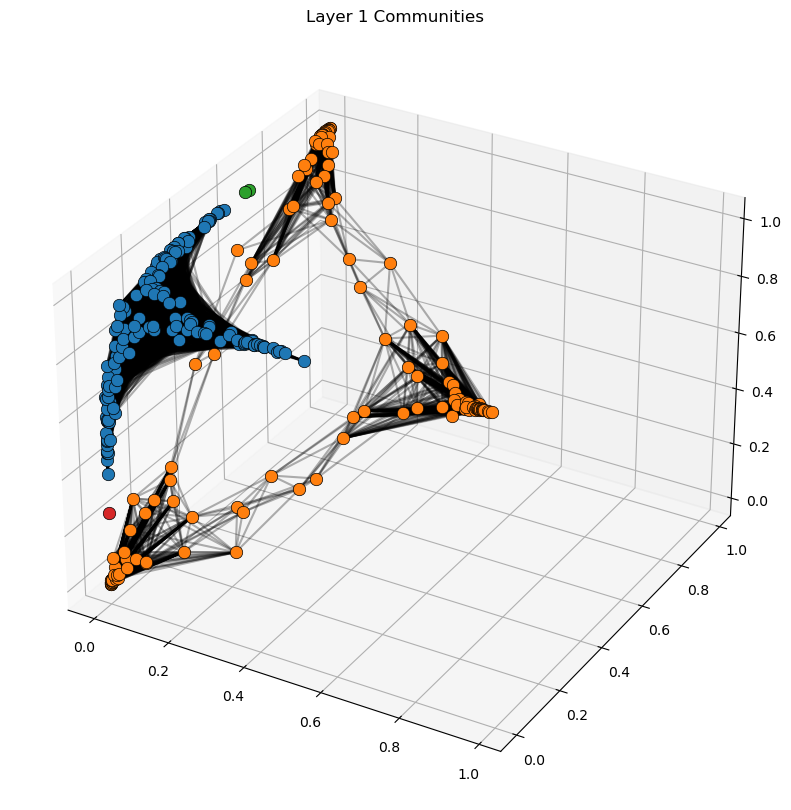}
        \caption{Layer-wise VR 1-skeleton for $X_1$ with $\varepsilon_1=0.4$.}
        \label{fig:ex1_VR1}
    \end{subfigure}
\caption{(Section~\ref{subsec:example}) Layer-wise 1-skeleton for MLP persistence for the two concentric cycles classification problem. Each connected component has a different color.}
    \label{fig:ex1_MLP_VR}
\end{figure}

In Figure~\ref{fig:Ex1_MLP_pers}, we observe that at layer 0 there are four connected components, but two of them disappear at the output layer, as expected, since the MLP correctly classifies the two classes. For the 1-cycles, we have two at layer 0, which correspond to the two cycles of the input dataset. However, one of them disappears at layer 1 (hidden layer) while a new one emerges. While layer persistence might suggest that the two 1-cycles persist unchanged through layers 0 and 1, MLP persistence reveals that only one of them remains active in both layers.

Finally, we analyse the trajectories of the connected components of the layer-wise Vietoris-Rips complexes in Figure~\ref{fig:ex1_trajectories}. We observe that the trajectories do not overlap throughout the entire MLP. For one class, we have one connected component that persists across all latent representations, while for the other class, we have one main component and two smaller ones. Trajectories provide a better understanding of the classification process and the evolution of connected components. However, since this toy example achieves perfect classification, all nodes in the graph are pure (no mixing between classes). The two dominant trajectories are $(1,1,1)$ with 150 points and $(0,0,0)$ with 147 points. The remaining two trajectories $(2,2,0)$ and $(3,3,0)$ contain only 2 and 1 points, respectively.

\begin{figure}[h]
    \centering
    \begin{subfigure}[t]{0.35\linewidth}
        \centering
        \begin{minipage}[t]{\linewidth}
            \includegraphics[width=\linewidth]{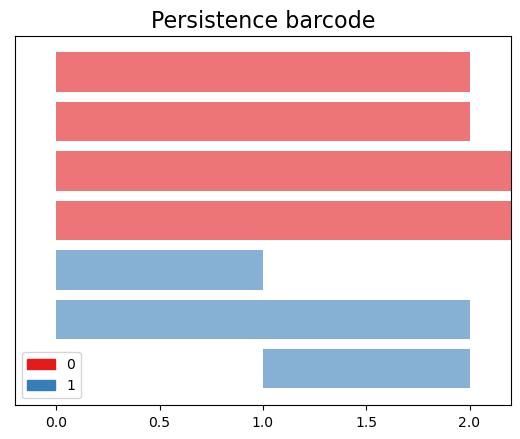}
            \caption{Persistence barcode of the MLP for the two concentric cycles classification problem with $\varepsilon=\{0.5,0.4,0.2\}$. The $x$-axis represents the layers 0 (input), 1 (hidden), and 2 (output). Bars are colored red for connected components and blue for 1-cycles.}
            \label{fig:Ex1_MLP_pers}
        \end{minipage}
    \end{subfigure}
    \hspace{10mm}
    \begin{subfigure}[t]{0.45\linewidth}
        \centering
        \begin{minipage}[t]{\linewidth}
            \includegraphics[width=\linewidth]{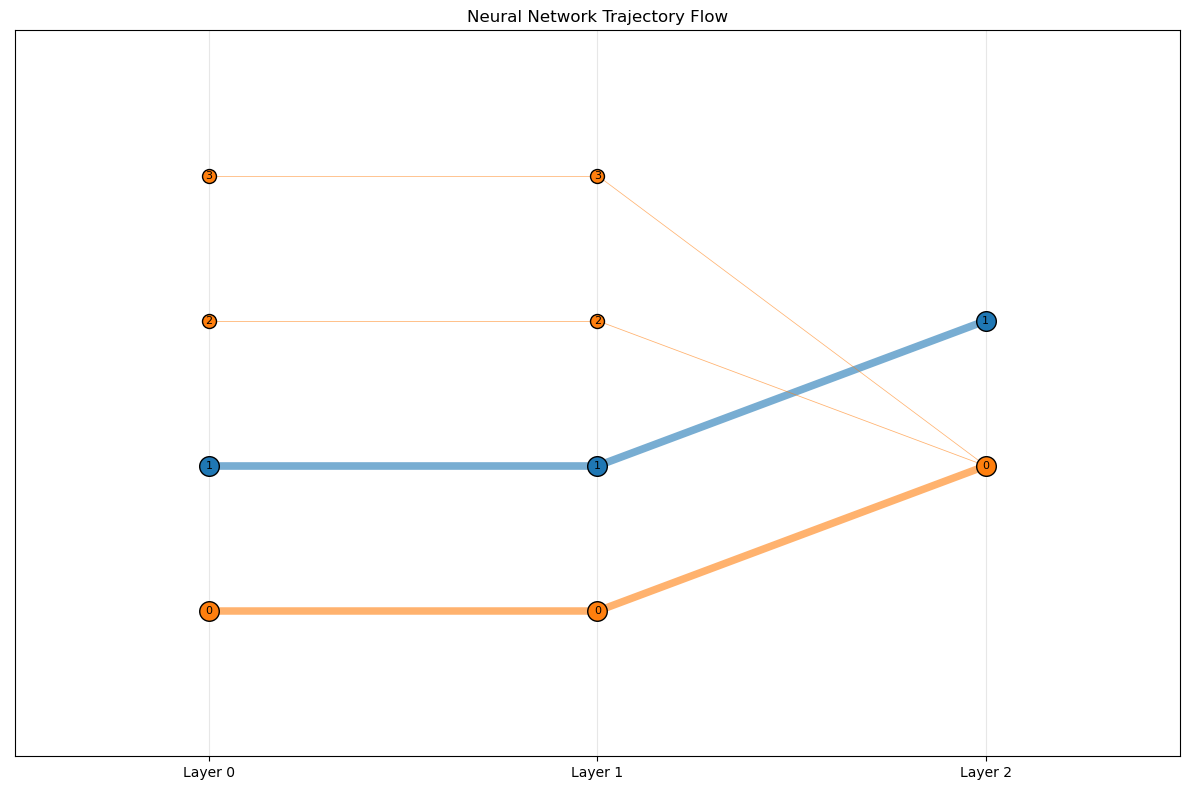}
            \caption{Graph of the trajectories of the connected components of the layer-wise Vietoris-Rips complexes with $\varepsilon=\{0.5,0.4,0.2\}$, colored by class.}
            \label{fig:ex1_trajectories}
        \end{minipage}
    \end{subfigure}
    \caption{(Section~\ref{subsec:example}) Topological representations of the two concentric cycles classification problem.}
    \label{fig:ex1_combined}
\end{figure}

\subsection{Cardiotocography classification}\label{sec:cardiotocography}

We analyse the cardiotocography dataset~\cite{cardiotocography_193}, which comprises 2,126 fetal cardiotocograms with 21 features and considers its classification problem version with 10 NST classes. NST classes assist obstetricians in assessing fetal health and identifying potential risks. We transform this dataset into a binary classification problem by grouping the NST classes into normal (Classes 1-4) versus concerning patterns (Classes 7-10).

In this experiment, to reduce the number of simplices in the Vietoris-Rips complex when computing layer persistence, we sparsify the dataset at each latent representation such that the squared distance between any two points is greater than or equal to 0.05. This substantially reduces the complex size and enables faster computation while not changing the persistence diagram by more than 0.05. For MLP persistence computation, we also sparsify the data, but only at the input layer, while maintaining the same data for the remaining layers.

As an initial exploration step, we consider an MLP $F:\mathbb{R}^{21}\rightarrow\mathbb{R}^{32}\rightarrow \mathbb{R}^{16}\rightarrow \mathbb{R}^{8}\rightarrow \mathbb{R}^{4}\rightarrow\mathbb{R}$ and compute MLP persistence along with the trajectories. By analysing the trajectories, we discover that all homological changes occur within the first three layers and then remain stable. Consequently, we infer that the model is overparameterized and can be reduced in size. Based on this insight, we train a new MLP with fewer layers $G:\mathbb{R}^{21}\rightarrow\mathbb{R}^{32}\rightarrow\mathbb{R}$, achieving 0.96 accuracy. 

In Figure~\ref{fig:ex2_layer_persistence_2}, we observe the persistence diagrams for layer persistence. We notice that from layer 0 to layer 1, the number of 1-cycles is reduced. We then compute MLP persistence using $\varepsilon=\{1,2.5,0.2\}$ and obtain the persistence barcode shown in Figure~\ref{fig:Ex2_MLP_pers_2}. There, we observe that already at layer 1, most connected components disappear, and the data clusters into three connected components. For the 1-cycles, a similar pattern occurs: only two of the cycles remain across both layers, while two new ones appear at the hidden layer. The evolution of the connected components can be more intuitively appreciated in the trajectory graph in Figure~\ref{fig:ex2_trajectories_2}, where we observe that from the beginning, there are two larger clusters that persist throughout the entire MLP. We can analyse the two dominant trajectories $(0,0,0)$ and $(1,1,1)$, both of which have a purity of almost 1, indicating that the clusters clearly differentiate between both classes.

\begin{figure}[h]
    \centering
    \begin{subfigure}[b]{0.3\linewidth}
        \includegraphics[width=\linewidth]{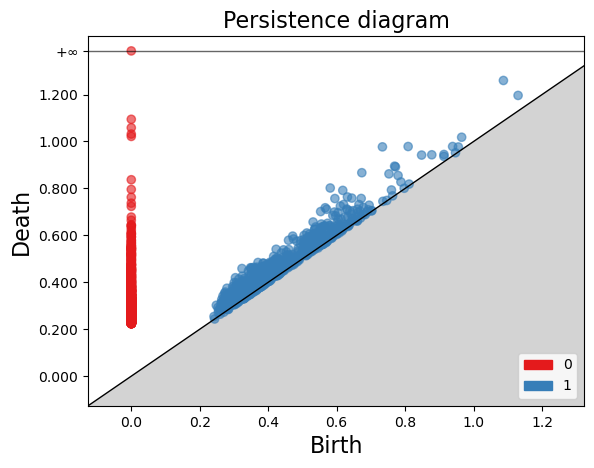}
        \caption{Layer persistence of $X_0$.}
        \label{fig:ex2_PH_L0_2}
    \end{subfigure}
    \hfill
    \begin{subfigure}[b]{0.3\linewidth}
        \includegraphics[width=\linewidth]{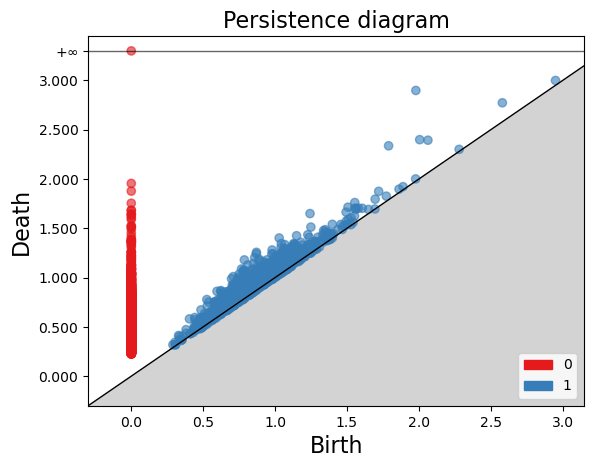}
        \caption{Layer persistence of $X_1$.}
        \label{fig:ex2_PH_L1_2}
    \end{subfigure}
    \hfill
    \begin{subfigure}[b]{0.3\linewidth}
        \includegraphics[width=\linewidth]{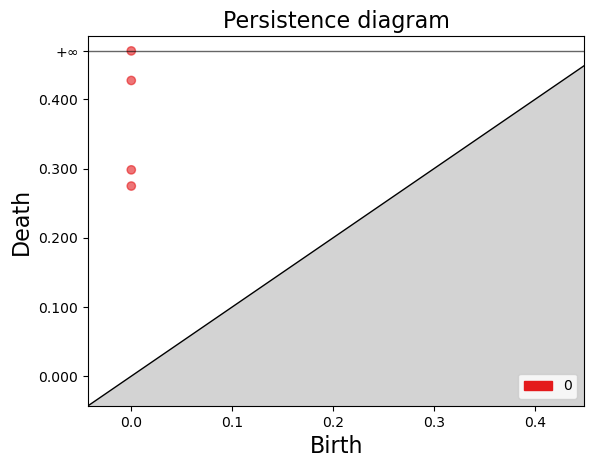}
        \caption{Layer persistence of $X_2$.}
        \label{fig:ex2_PH_L2_2}
    \end{subfigure}
    \caption{(Section~\ref{sec:cardiotocography}) Layer persistence for the cardiotocography classification problem using model $G$.}
    \label{fig:ex2_layer_persistence_2}
\end{figure}

\begin{figure}[h]
    \centering
    \begin{subfigure}[t]{0.35\linewidth}
        \centering
        \begin{minipage}[t]{\linewidth}
            \includegraphics[width=\linewidth]{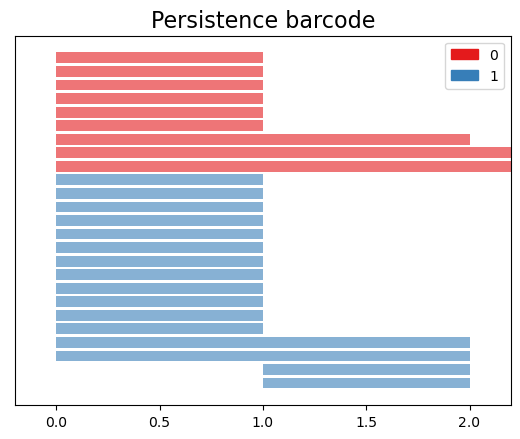}
            \caption{Persistence barcode of the MLP for the cardiotocography classification problem with $\varepsilon=\{1,2.5,0.2\}$. The $x$-axis represents the layer indices.}
            \label{fig:Ex2_MLP_pers_2}
        \end{minipage}
    \end{subfigure}
    \hspace{10mm}
    \begin{subfigure}[t]{0.45\linewidth}
        \centering
        \begin{minipage}[t]{\linewidth}
            \includegraphics[width=\linewidth]{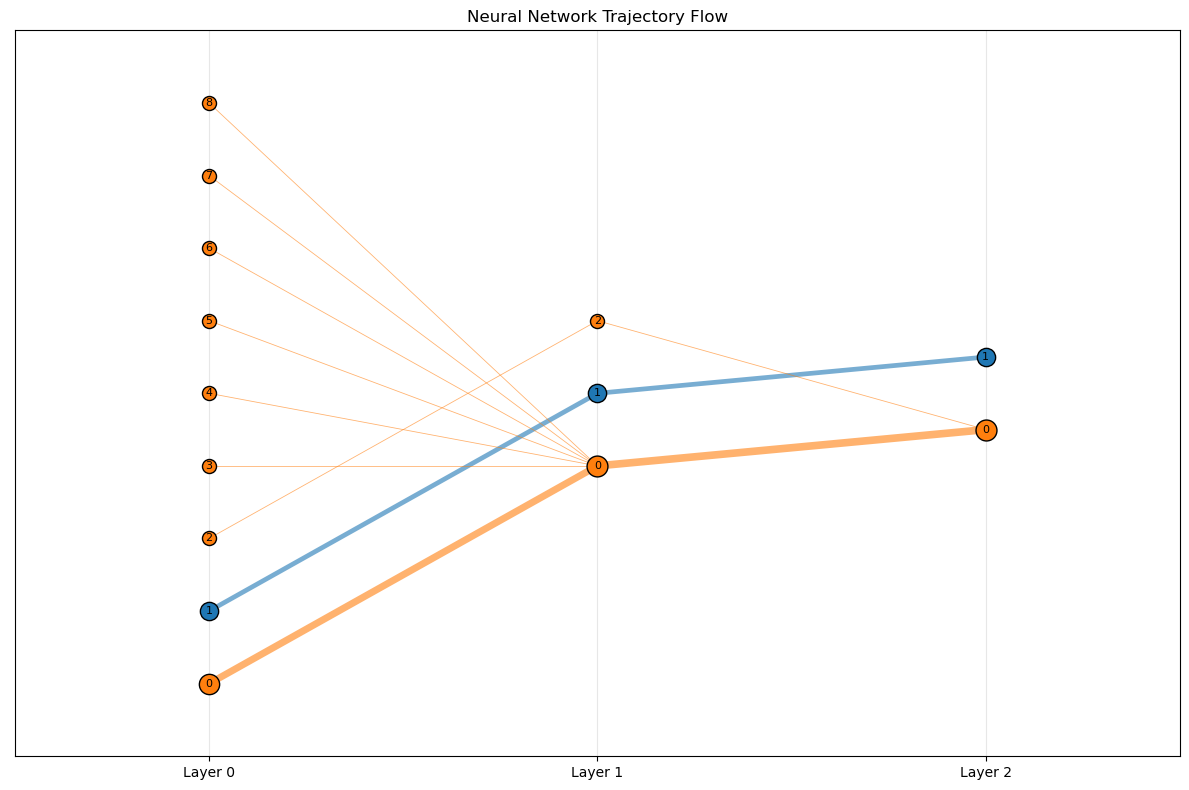}
            \caption{Graph of the trajectories of the connected components of the layer-wise Vietoris-Rips complexes with $\varepsilon=\{1,2.5,0.2\}$, colored by the predominant class.}
            \label{fig:ex2_trajectories_2}
        \end{minipage}
    \end{subfigure}
    \caption{(Section~\ref{sec:cardiotocography}) MLP persistence and trajectories for model $G$ in the cardiotocography classification problem.}
    \label{fig:ex2_combined_2}
\end{figure}


\section{Conclusions and future work}\label{sec:conclusions}

In this paper, we have introduced a novel topological framework for analysing and interpreting the internal representations of Multilayer Perceptrons. Our approach leverages concepts from computational topology to provide insights into how neural networks progressively transform and organise data across layers. By developing a bi-persistence framework inspired by Multiscale Mapper, we enable the systematic tracking of topological features through both layer-wise evolution and across the entire network architecture.

The key contributions of our work include:

\begin{enumerate}
    \item A formal methodology for constructing simplicial towers that capture the topological evolution of data representations in MLPs, connected by simplicial maps that reflect the network's transformations.
    \item A bi-persistence framework that enables two complementary analyses: layer persistence, which captures the intrinsic topological structure at each layer, and MLP persistence, which tracks how features evolve across the network.
    \item Stability results that provide theoretical guarantees for our topological descriptors under cover perturbations, ensuring the robustness of our approach.
    \item A trajectory-based visualisation tool that transforms abstract topological concepts into interpretable representations of how data points flow through the network.
    \item A practical combinatorial approach for computing MLP persistence that makes the framework accessible and computationally feasible.
\end{enumerate}
Despite the promising results, our framework has several limitations that open avenues for future research. While we currently analyse trained networks as static entities, extending our approach to track topological evolution throughout the training process could yield valuable insights into learning dynamics and convergence properties. Although our methodology provides interpretable representations of network behaviour, there remains significant potential to develop more user-friendly tools that translate these topological insights into actionable knowledge for practitioners without expertise in algebraic topology. Additionally, the computational complexity of constructing simplicial towers and computing exact persistent homology presents challenges for analysing very large networks or datasets. Therefore, a key direction for our future work is developing efficient approximation techniques that preserve essential topological insights while substantially reducing computational demands, making our approach more practical for real-world applications.

\subsubsection*{Code availability}
The code is available in the following GitHub repository:\\
\url{https://github.com/EduPH/Latent-Space-Topology-Evolution-in-Multilayer-Perceptrons}. Persistent homology computations were carried out using the Python library \texttt{Gudhi}~\cite{gudhi:urm}.

\subsubsection*{Aknowledgements}
This project has received funding from the European Union's Horizon 2020 research and innovation programme under the Marie Skłodowska-Curie grant agreement No 101153039 (CHALKS). I would like to thank Professor Vidit Nanda and Dr. Chunyin Siu for their insightful conversations and valuable comments.

\bibliography{references}{}

\begin{thebibliography}{10}

\bibitem{cardiotocography_193}
D.~Campos and J.~Bernardes.
\newblock {Cardiotocography}.
\newblock UCI Machine Learning Repository, 2000.
\newblock {DOI}: https://doi.org/10.24432/C51S4N.

\bibitem{cohen2007stability}
David Cohen-Steiner, Herbert Edelsbrunner, and John Harer.
\newblock Stability of persistence diagrams.
\newblock {\em Discrete Computational Geometry}, 37(1):103–120, December 2006.

\bibitem{10.5555/2884435.2884506}
Tamal~K. Dey, Facundo M\'{e}moli, and Yusu Wang.
\newblock Multiscale mapper: topological summarization via codomain covers.
\newblock In {\em Proceedings of the Twenty-Seventh Annual ACM-SIAM Symposium on Discrete Algorithms}, SODA '16, page 997–1013, USA, 2016. Society for Industrial and Applied Mathematics.

\bibitem{Dey_Wang_2022}
Tamal~Krishna Dey and Yusu Wang.
\newblock {\em Computational Topology for Data Analysis}.
\newblock Cambridge University Press, 2022.

\bibitem{DBLP:books/daglib/0025666}
Herbert Edelsbrunner and John Harer.
\newblock {\em Computational Topology - an Introduction}.
\newblock American Mathematical Society, 2010.

\bibitem{grigsby2021transversalitybenthyperplanearrangements}
J.~Elisenda Grigsby and Kathryn Lindsey.
\newblock On transversality of bent hyperplane arrangements and the topological expressiveness of relu neural networks, 2021.

\bibitem{hatcher2002algebraic}
Allen Hatcher.
\newblock {\em Algebraic Topology}.
\newblock Cambridge University Press, Cambridge, 2002.

\bibitem{johnson2018deepskinnyneuralnetworks}
Jesse Johnson.
\newblock Deep, skinny neural networks are not universal approximators, 2018.

\bibitem{Kerber2018}
Michael Kerber and Hannah Schreiber.
\newblock Barcodes of towers and a streaming algorithm for persistent homology.
\newblock {\em Discrete Computational Geometry}, 61(4):852–879, October 2018.

\bibitem{lacombe2021topologicaluncertaintymonitoringtrained}
Théo Lacombe, Yuichi Ike, Mathieu Carriere, Frédéric Chazal, Marc Glisse, and Yuhei Umeda.
\newblock Topological uncertainty: Monitoring trained neural networks through persistence of activation graphs, 2021.

\bibitem{lee2024definingneuralnetworkarchitecture}
Sangmin Lee, Abbas Mammadov, and Jong~Chul Ye.
\newblock Defining neural network architecture through polytope structures of datasets.
\newblock In Ruslan Salakhutdinov, Zico Kolter, Katherine Heller, Adrian Weller, Nuria Oliver, Jonathan Scarlett, and Felix Berkenkamp, editors, {\em Proceedings of the 41st International Conference on Machine Learning}, volume 235 of {\em Proceedings of Machine Learning Research}, pages 26789--26836. PMLR, 21--27 Jul 2024.

\bibitem{lee2023datatopologydependentupperbounds}
Sangmin Lee and Jong~Chul Ye.
\newblock Data topology-dependent upper bounds of neural network widths, 2023.

\bibitem{NIPS2017_7062}
Scott~M Lundberg and Su-In Lee.
\newblock A unified approach to interpreting model predictions.
\newblock In I.~Guyon, U.~V. Luxburg, S.~Bengio, H.~Wallach, R.~Fergus, S.~Vishwanathan, and R.~Garnett, editors, {\em Advances in Neural Information Processing Systems 30}, pages 4765--4774. Curran Associates, Inc., 2017.

\bibitem{madukpe2025comprehensivereviewmapperalgorithm}
Vine~Nwabuisi Madukpe, Bright~Chukwuma Ugoala, and Nur Fariha~Syaqina Zulkepli.
\newblock A comprehensive review of the mapper algorithm, a topological data analysis technique, and its applications across various fields (2007-2025), 2025.

\bibitem{Montavon2019}
Gr{\'e}goire Montavon, Alexander Binder, Sebastian Lapuschkin, Wojciech Samek, and Klaus-Robert M{\"u}ller.
\newblock {\em Layer-Wise Relevance Propagation: An Overview}, pages 193--209.
\newblock Springer International Publishing, Cham, 2019.

\bibitem{JMLR:v21:20-345}
Gregory Naitzat, Andrey Zhitnikov, and Lek-Heng Lim.
\newblock Topology of deep neural networks.
\newblock {\em Journal of Machine Learning Research}, 21(184):1--40, 2020.

\bibitem{https://doi.org/10.1155/2013/486363}
Paul~T. Pearson.
\newblock Visualizing clusters in artificial neural networks using morse theory.
\newblock {\em Advances in Artificial Neural Systems}, 2013(1):486363, 2013.

\bibitem{lime}
Marco~Tulio Ribeiro, Sameer Singh, and Carlos Guestrin.
\newblock "why should {I} trust you?": Explaining the predictions of any classifier.
\newblock In {\em Proceedings of the 22nd {ACM} {SIGKDD} International Conference on Knowledge Discovery and Data Mining, San Francisco, CA, USA, August 13-17, 2016}, pages 1135--1144, 2016.

\bibitem{saul2018machine}
Nathaniel Saul and Dustin~L Arendt.
\newblock Machine learning explanations with topological data analysis.
\newblock In {\em VISxAI Workshop}, 2018.

\bibitem{Singh2007}
Gurjeet Singh, Facundo Memoli, and Gunnar Carlsson.
\newblock Topological methods for the analysis of high dimensional data sets and 3d object recognition, 2007.

\bibitem{SURESH2024165}
S.~Suresh, B.~Das, V.~Abrol, and S.~{Dutta Roy}.
\newblock On characterizing the evolution of embedding space of neural networks using algebraic topology.
\newblock {\em Pattern Recognition Letters}, 179:165--171, 2024.

\bibitem{gudhi:urm}
{The GUDHI Project}.
\newblock {\em {GUDHI} User and Reference Manual}.
\newblock {GUDHI Editorial Board}, 2015.

\bibitem{KeplerMapper_JOSS}
Hendrik~Jacob van Veen, Nathaniel Saul, David Eargle, and Sam~W. Mangham.
\newblock Kepler mapper: A flexible python implementation of the mapper algorithm.
\newblock {\em Journal of Open Source Software}, 4(42):1315, 2019.

\end{thebibliography}
\bibliographystyle{plain}
\end{document}